\colorlet{shadecolor}{FloralWhite!60}
\colorlet{titleshade}{OliveDrab!15}
\newsavebox{\mysaveboxM} 
\newsavebox{\mysaveboxT} 
\newcommand*\Garybox[2][Example]{%
	\sbox{\mysaveboxM}{#2}%
	\sbox{\mysaveboxT}{\fcolorbox{black}{titleshade}{\enspace#1\enspace}}%
	\sbox{\mysaveboxM}{%
		\parbox[b][\ht\mysaveboxM+.5\ht\mysaveboxT+.5\dp\mysaveboxT][b]{%
			\wd\mysaveboxM}{#2}%
	}%
	\sbox{\mysaveboxM}{%
		\fcolorbox{black}{shadecolor}{%
			\makebox[\linewidth-3.5em]{\usebox{\mysaveboxM}}%
		}%
	}%
	\usebox{\mysaveboxM}%
	\makebox[0pt][r]{%
		\makebox[\wd\mysaveboxM][l]{%
			\raisebox{\ht\mysaveboxM-0.5\ht\mysaveboxT
				+0.5\dp\mysaveboxT-0.5\fboxrule}{\usebox{\mysaveboxT}}%
		}%
	}%
}%
\colorlet{shadecoloralg}{White}
\colorlet{titleshadealg}{FloralWhite!60}
\newcommand*\GaryboxAlg[2][Example]{%
	\sbox{\mysaveboxM}{#2}%
	\sbox{\mysaveboxT}{\fcolorbox{black}{titleshadealg}{\enspace#1\enspace}}%
	\sbox{\mysaveboxM}{%
		\parbox[b][\ht\mysaveboxM+.5\ht\mysaveboxT+.5\dp\mysaveboxT][b]{%
			\wd\mysaveboxM}{#2}%
	}%
	\sbox{\mysaveboxM}{%
		\fcolorbox{black}{shadecoloralg}{%
			\makebox[\linewidth-3.5em]{\usebox{\mysaveboxM}}%
		}%
	}%
	\usebox{\mysaveboxM}%
	\makebox[0pt][r]{%
		\makebox[\wd\mysaveboxM][c]{%
			\raisebox{\ht\mysaveboxM-0.5\ht\mysaveboxT
				+0.5\dp\mysaveboxT-0.5\fboxrule}{\usebox{\mysaveboxT}}%
		}%
	}%
}%
\newtheorem{theorem}{Theorem}
\newtheorem{lemma}[theorem]{Lemma}
\theoremstyle{definition}
\newtheorem{remark}[theorem]{Remark}
\newtheorem{fact}[theorem]{Fact}
\newtheorem{obsn}[theorem]{Observation}
\newcommand{\R}{\mathbb{R}}
\newcolumntype{P}[1]{>{\centering\arraybackslash}p{#1}}
\newcolumntype{g}{>{\columncolor[rgb]{0.89, 0.89, 1}}c}
\newcommand\mtiny[1]{\mbox{\scriptsize\ensuremath{#1}}}
\def\argmax{\mathrm{argmax}}
\def\argmin{\mathrm{argmin}}
\def\argmax{\mathrm{argmax}}
\def\argmin{\mathrm{argmin}}
\definecolor{mygrey}{HTML}{929292}
\definecolor{mylightgrey}{HTML}{E7E7E7}
\definecolor{mypink}{HTML}{99195E}
\definecolor{mylightpink}{HTML}{EEDBE7}
\definecolor{mygreen}{HTML}{017100}
\definecolor{mylightgreen}{HTML}{DCE9DA}
\newcommand*{\figuretitle}[1]{%
	\hskip 2pt \centering
	{
		\tiny #1
		\vskip -2pt}
}
\begin{document}

\title{FairALM: Augmented Lagrangian Method for Training Fair Models with \\ Little Regret}

\author{{\normalsize \textbf{Vishnu Suresh Lokhande \textsuperscript{\rm 1} }}\\
{\tt\small lokhande@cs.wisc.edu}
\and
{\normalsize \textbf{Aditya Kumar Akash \textsuperscript{\rm 1}}}\\
{\tt\small aakash@wisc.edu}
\and
{\normalsize \textbf{Sathya N. Ravi \textsuperscript{\rm 2}}} \\
{\tt\small sathya@uic.edu}
\and
{\normalsize \textbf{Vikas Singh \textsuperscript{\rm 1}}} \\
{\tt\small vsingh@biostat.wisc.edu}
\and
{\textsuperscript{\rm 1}University of Wisconsin-Madison}\\ 
{\textsuperscript{\rm 2}University of Illinois at Chicago}
}

\maketitle

\begin{abstract}
Algorithmic decision making based on computer vision and machine learning 
technologies continue to permeate our lives.
But issues related to biases of these models 
and the extent to which they treat certain segments of the population
unfairly, have led to concern in the general public. 
It is now accepted that because of biases in the datasets 
we present to the models, a fairness-oblivious training will 
lead to unfair models. An interesting topic is the study of mechanisms via which
the {\em de novo} design or training of the model can be informed by fairness measures.
Here, we study mechanisms that impose fairness concurrently while training the model.  
While existing fairness based approaches in vision have largely relied 
on training adversarial modules together with the primary
classification/regression task, in an effort 
to remove the influence of the protected attribute or variable, 
we show how ideas based on well-known optimization concepts can provide a  
simpler alternative. In our proposed scheme, imposing fairness just
requires specifying the protected attribute and utilizing our optimization routine. We provide a 
detailed technical analysis and present experiments demonstrating 
that various fairness measures from the literature can be reliably imposed 
on a number of training tasks in vision in a manner that is interpretable. A project page is available on \href{https://github.com/lokhande-vishnu/FairALM}{//GitHub}.
\end{abstract}

\section{Introduction}
Fairness and non-discrimination is a core tenet of modern society. Driven
by advances in vision and machine learning systems, algorithmic decision making continues to
permeate our lives in important ways. Consequently, ensuring that the decisions taken
by an algorithm do not exhibit serious biases is no longer a hypothetical topic, rather
a key concern that has started informing legislation \cite{Goodman_Flaxman_2017} (e.g., Algorithmic Accountability act).
On one extreme, some types of biases can be bothersome -- a biometric access system
could be more error-prone for faces of persons from certain skin tones \cite{buolamwini2018gender} or a search for
{\tt \small homemaker} or {\tt \small programmer} may
return gender-stereotyped images \cite{bolukbasi2016man}. 
But there are serious ramifications as well -- an individual may get pulled aside for an intrusive check while
traveling \cite{zuber2014critical} or a model may decide to pass on an individual for a job interview after digesting his/her social media content\cite{chin_2019,heilweil_2019}. 
Biases in automated systems in estimating recidivism within the criminal judiciary have been reported \cite{ustun2016learning}.
There is a growing realization that these problems need to be
identified and diagnosed, and then promptly addressed. In the worst case, if no solutions
are forthcoming, we must step back and reconsider the trade-off between
the benefits versus the harm of deploying such systems, on a case by case basis. \\
\begin{figure}[!t]
	\centering
	\includegraphics[width=0.9\columnwidth]{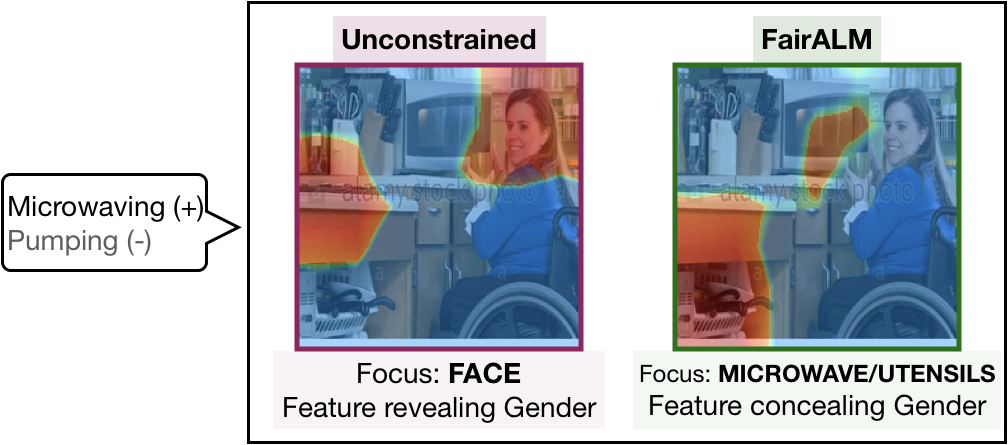}
	\caption{\label{fig:intro_fig} \footnotesize {The heat maps of an unconstrained model and a fair model are depicted in this figure. The models are trained to predict the target label \textit{Microwaving} ( indicated by a $(+)$). The fair model attempts to make unbiased predictions with respect to sensitive attribute \textit{gender}. In this example, it is observed that the heat maps of an unconstrained model are concentrated around gender revealing attributes such as the face of person. Alternatively, the heat maps of the fair model are concentrated around non-gender revealing attributes, such as utensils and microwave, which also happen to be more aligned to the target label.}}
	\vskip -0.2in
\end{figure}

{\bf What leads to unfair learning models?} One finds that learning methods in general tend to amplify biases that exist in the training dataset \cite{zhao2019gender}.
While this creates an incentive for the organization training the model to curate datasets that are ``balanced''
in some sense, from a practical standpoint, it is often difficult to collect data that is balanced along
multiple predictor variables that are ``protected'', e.g., gender, race and age.
If a protected feature is correlated with the response variable, a learning model can {\em cheat} and
find representations from other features that are collinear or a good surrogate for the protected variable. 
A thrust in current research is devoted to devising ways to mitigate such shortcuts.  
If one
does not have access to the underlying algorithm, a recent result \cite{hardt2016equality} shows
the feasibility of finding thresholds that can impose certain fairness criteria. Such a threshold
search can be post-hoc applied to any learned model. But in various cases, 
because of the characteristics of the dataset, 
a fairness-oblivious training will lead to biased models. An
interesting topic is the study of mechanisms via which
the {\em de novo} design or training of the model can be informed by fairness measures.

{\bf Some general strategies for Fair Learning.}
Motivated by the foregoing issues, recent work which
may broadly fall under the topic of {\em algorithmic fairness}
has suggested several concepts or measures
of fairness that can be incorporated within the learning model. 
While we will discuss the details shortly, these include 
demographic parity \cite{yao2017beyond}, equal odds and equal opportunities \cite{hardt2016equality}, and
disparate treatment \cite{zafar2017fairness}. In general,
existing work can be categorized into a few distinct categories.
The {\em first} category of methods attempts to modify the representations of the data to ensure
fairness. While different methods approach this question in different ways,
  the general workflow involves imposing fairness {\em before} a subsequent use of 
standard machine learning methods \cite{calmon2017optimized,kamiran2010classification}.
The {\em second} group of methods adjusts the decision boundary of an already trained classifier towards
making it fair as a {\em post}-processing step while trying to incur as little deterioration
in overall performance as possible \cite{goh2016satisfying,fish2016confidence,woodworth2017learning}.
While this procedure is convenient and fast,
it is not always guaranteed to lead to a fair model
without sacrificing accuracy. Part of the reason
is that the search space for a fair solution in the post-hoc tuning
is limited. Of course, we may impose fairness during training directly as
adopted in the {\em third} category of papers such as
\cite{zafar2017parity,bechavod2017penalizing}, and the
approach we take here.
Indeed, if we are training the model from scratch and have knowledge of
the protected variables, there is
little reason not to incorporate this information directly {\em during} model training. 
In principle, this strategy provides the maximum control over the model.  
From the formulation standpoint, it is 
slightly more involved because it requires satisfying a fairness constraint derived from 
one or more fairness measure(s) in the literature, while concurrently learning the model parameters.
The difficulty varies depending both on the primary task (shallow versus deep model)
as well as the specific fairness criteria. For instance, if one were using a deep network for classification,
we would need to devise ways to enforce constraints on the {\em output} of the network, efficiently.

{\bf Scope of this paper and contributions.}
Many studies on fairness in learning and vision are somewhat recent and
were partly motivated in response to more than a few controversial reports in the
news media. As a result,
the literature on mathematically sound and practically sensible fairness measures that can 
still be incorporated while training a model is still in a nascent stage. In vision, current
approaches have largely relied on training adversarial modules in conjunction with the primary
classification or regression task, to remove the influence of the protected attribute.
In contrast, the {\bf contribution} of our work is to provide a simpler alternative.
We show that a number of fairness measures in the literature can be incorporated by viewing them as
constraints on the {\em output} of the learning model.
This view allows adapting ideas from constrained optimization, to devise ways in which
training can be efficiently performed in a way that at termination, the model parameters correspond 
to a fair model. For a practitioner, this means that no changes in the
architecture or model are needed: imposing fairness only requires specifying the protected attribute,
and utilizing our proposed optimization routine.

\section{A Primer on Fairness Functions}
In this section, we introduce basic notations and briefly review several fairness measures described in the literature.\\
{\bf Basic notations.} \label{sec:nots} We denote classifiers using $h:x\mapsto y$ where $x$ and $y$ are random variables that
represent the features and labels respectively.
A {\em protected} attribute is a random variable $s$ on the same probability space as $x$ and $y$ --
for example, $s$ may be gender, age, or race. Collectively, a training example would be $z:= (x, y, s)$. So, our goal is to learn $h$ (predict $y$ given $x$) while
{\em imposing fairness-type constraints} over $s$. We will use $\mathcal{H} = \{h_1, h_2, \hdots, h_N\}$ to denote a
set/family of possible classifiers and $\Delta^N$ to denote the probability simplex in $\R^N$, i.e.,
$\Delta:=\{q:\sum_{i=1}^Nq_i=1,q_i\geq 0 \}$ where $q_i$ is the $i$-th coordinate of $q$. 

Throughout the paper, we will assume that the distribution of $s$ has  finite support. Unless explicitly specified, we will assume that $y\in\{0,1\}$ in the main paper. For each $h\in\mathcal{H}$, we will use $e_h$ to denote the misclassification rate of $h$ and $e_{\mathcal{H}}\in\R^N$ to be the vector containing all misclassification rates. We will use superscript to denote conditional expectations. That is, if $\mu_h$ corresponds to expectation of some function $\mu$ (that depends on $h\in\mathcal{H}$), then the conditional expectation/moment of $\mu_h$ with respect to $s$ will be denoted by $\mu_{h}^s$. With a slight abuse of notation, we will use $\mu_h^{s_0}$ to denote the elementary conditional expectation $\mu_h|(s=s_0)$ whenever it is clear from the context. We will use $d_{h}$ to denote the
{\em difference} between the conditional expectation of the two groups of $s$, that is, $d_h := \mu_h^{s_0} - \mu_h^{s_1}$.  For example, let $s$ be the random variable representing 
gender, that is, $s_0$ and $s_1$ may correspond to male and female. Then, $e_h^{s_i}$ corresponds to the misclassification rate of $h$ on group $s_i$, and $d_h=e_h^{s_0} - e_h^{s_1}$. Finally, $\mu_h^{s_i,t_j}:=\mu_h|(s=s_i,t=t_j)$ denotes the elementary conditional expectation with respect to two random variables $s,t$. 

\subsection{Fairness through the lens of Confusion Matrix}

Recall that a {\em fairness} constraint corresponds to a performance requirement of a classifier $h$ on
subgroups of features $x$ {\em induced} by a protected attribute $s$.
For instance, say that $h$ predicts the credit-worthiness $y$ of an individual $x$.
Then, we may require that $e_h$ be ``approximately'' the same across individuals for different races given by $s$. Does it follow that functions/metrics that are used to evaluate fairness may be written in terms of the
error of a classifier $e_h$ {\em conditioned} on the protected variable $s$ (or in other words $e_h^s$)? Indeed, it does turn out to be the case! In fact, many widely used functions in practice can be viewed as imposing constraints on the confusion matrix as our intuition suggests. We will now discuss few common fairness metrics to illustrate this idea. 

{\bf (a) Demographic Parity (DP) \cite{yao2017beyond}.} A classifier $h$ is said to satisfy Demographic Parity (DP) if $h(x)$ is {\em independent}
of the protected attribute $s$. Equivalently, $h$ satisfies DP if  $d_h=0$ where we set $\mu_{h}^{s_i} = e_h^{s_i}$ (using notations introduced above). DP can be seen as equating the total false positives and false negatives between the confusion matrices of the two groups. We denote DDP by the difference of the demographic parity between the two groups. 

{\bf (b) Equality of Opportunity (EO) \cite{hardt2016equality}.} \label{sec:EO} A classifier $h$ is said to satisfy EO
if $h(x)$ is independent of the protected attribute $s$ for $y\in\{0,1\}$. Equivalently, $h$ satisfies EO if $d_h^{y}=0$ where we set $\mu_h^{s_i} = e_h^{s_i}|(y\in\{0,1\})=: e_h^{s_i,y_j}$ conditioning on both $s$ and $y$.
 Depending on the choice of $y$ in $\mu_h^{s_i}$, we get two different metrics: \begin{enumerate*}[label=(\roman*)]
	\item {\bf $y=0$} corresponds to $h$ with equal {\it  False Positive Rate (FPR)} across $s_i$ \cite{chouldechova2017fair}, whereas 
	\item {\bf $y=1$} corresponds to $h$ with equal {\it  False Negative Rate (FNR)} across $s_i$ \cite{chouldechova2017fair}. 
	\end{enumerate*}
Moreover, $h$ satisfies {\it Equality of Odds} if $d_h^{0}+d_h^{1}=0$, i.e., $h$ equalizes both TPR and FPR across $s$ \cite{hardt2016equality}. 
We denote the difference in EO by DEO.

{\bf (c) Predictive Parity (PP) \cite{celis2019classification}.} A classifier $h$ satisfies PP if the likelihood of making a misclassification among the positive predictions of the classifier is independent of the protected variable $s$. 
Equivalently, $h$ satisfies PP if $d_h^{\hat{y}}=0$ where we set $\mu_{h_i}^{s_i} = e_h^{s_i}|(\hat{y}=1)$. It corresponds to matching the False Discovery Rate between the confusion matrices of the two groups.

\section{How to learn fair models?} At a high level, the optimization problem that we seek to solve is written as,\begin{align}
 \min_{h\in\mathcal{H}} \mathbb{E}_{z:(x,y,s)\sim \mathcal{D}} \mathcal{L}(h;(x,y)) ~~\text{subject to}~~ h \in \mathcal{F}_{d_h},	\label{eq:fairopt}	
\end{align}
where $\mathcal{L}$ denotes the loss function that measures the accuracy of $h$ in predicting $y$ from $x$, and $\mathcal{F}_{d_h}$ denotes the set of {\em fair} classifiers. Our approach
to solve \eqref{eq:fairopt} {\em provably efficiently} involves two main steps: \begin{enumerate*}[label=(\roman*)]\item first, we reformulate  problem \eqref{eq:fairopt} to compute a posterior distribution $q$ over $\mathcal{H}$; \item second, we incorporate fairness as {\em soft} constraints on the output of $q$ using the augmented Lagrangian of Problem \eqref{eq:fairopt}.
  We assume that we have access to sufficient number of samples to approximate $\mathcal{D}$ and solve the empirical version of Problem \eqref{eq:fairopt}. 
\end{enumerate*}

\subsection{From Fair Classifiers to Fair Posteriors} \label{sec:model}
The starting point of our development is based on the following simple result that follows directly from the definitions of fairness metrics in Section \ref{sec:nots}:
\begin{obsn}\label{obs1:rand}
Fairness metrics such as DP/EO are {linear} functions of $h$, whereas PP takes a linear {fractional} form due to the conditioning on $\hat{y}$, see \cite{celis2019classification}.
\end{obsn}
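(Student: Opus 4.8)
The statement is cleanest once we pass to randomized classifiers: a posterior $q\in\Delta^N$ is the rule that applies $h_j$ with probability $q_j$, and since every ingredient of a fairness metric is an expectation it extends to $q$ via $\mu_q^{s_i}=\sum_{j=1}^N q_j\,\mu_{h_j}^{s_i}$. The plan is to verify, metric by metric, how the conditioning event interacts with this randomization, handling DP/EO together and PP separately.

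First, for DP and EO I would observe that the conditioning events --- $\{s=s_i\}$ for DP, and $\{s=s_i\}\cap\{y=y_j\}$ for EO --- are measurable with respect to the data $z=(x,y,s)$ alone and are therefore independent of which index is drawn from $q$. The law of total probability then gives $e_q^{s_i}=\sum_j q_j\,e_{h_j}^{s_i}$ (and the analogous identity with the extra conditioning on $y$), so that $d_q=e_q^{s_0}-e_q^{s_1}=\langle q,\,d_{\mathcal H}\rangle$, where $d_{\mathcal H}\in\R^N$ collects the $d_{h_j}$. Hence the DP constraint $d_q=0$, the EO constraints $d_q^{0}=0$ or $d_q^{1}=0$, and the equalized-odds constraint $d_q^{0}+d_q^{1}=0$ are all linear in $q$.

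For PP the conditioning event is $\{\hat y=1\}=\{h(x)=1\}$, which now \emph{does} depend on the drawn classifier, so the tower-property shortcut fails. Here I would instead expand the conditional expectation by its definition,
\begin{align}
 e_q^{s_i}\mid(\hat y=1)\;=\;\Pr\!\big(y=0\mid \hat y=1,\,s=s_i\big)\;=\;\frac{\Pr(y=0,\ \hat y=1,\ s=s_i)}{\Pr(\hat y=1,\ s=s_i)}, \nonumber
\end{align}
and note that numerator and denominator are each expectations of indicators under the randomized rule, hence each of the form $\sum_j q_j(\cdot)$, i.e.\ affine in $q$. Since the denominator is bounded away from $0$ on the feasible region, the ratio is a well-defined linear-fractional function of $q$, and $d_q^{\hat y}$, a difference of two such ratios, is again linear-fractional after clearing denominators, matching the form recorded in \cite{celis2019classification}.

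The step I expect to be the crux is not any computation but the conceptual point isolated above: identifying precisely which conditioning sets are classifier-independent. For DP/EO they are, which is exactly why randomizing over $\mathcal H$ preserves linearity; for PP the set $\{\hat y=1\}$ is not, which is exactly what forces the Bayes-rule rewrite and the fractional dependence. The only remaining detail to nail down is that the PP denominator $\Pr(\hat y=1,\,s=s_i)$ stays strictly positive on the region we optimize over, so that the linear-fractional expression --- and the constrained program built from it --- is well posed.
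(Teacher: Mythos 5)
Your proposal is correct and is essentially the argument the paper has in mind: the paper states Observation 1 as following ``directly from the definitions'' (deferring the PP case to \cite{celis2019classification}) and gives no separate proof, and your expansion --- DP/EO condition on events measurable with respect to $z=(x,y,s)$ alone, so averaging over the index drawn from $q$ commutes with the conditioning and yields $\langle q, d_{\mathcal H}\rangle$, whereas PP conditions on $\{\hat y=1\}$, which depends on the drawn classifier and forces a ratio of two $q$-affine quantities --- is precisely the right way to fill in those details. The one caveat you flag, that the PP denominator $\Pr(\hat y=1, s=s_i)$ must stay bounded away from zero for the linear-fractional constraint to be well posed, is a genuine hypothesis the paper leaves implicit, so it is good that you made it explicit.
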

Observation \ref{obs1:rand} immediately implies that $\mathcal{F}_{d_h}$ can be represented using linear (fractional) equations in $h$. To simplify the discussion,
we will focus on the case when $\mathcal{F}_{d_h}$ is given by the DP metric. Hence, we can reformulate  \eqref{eq:fairopt} as,
\begin{align}
\label{eq:alm_twoplayer}
\min_{q\in\Delta} \quad \sum_i q_i e_{h_i} \text{ s.t. } q_i (\mu_{h_i}^{s_0} - \mu_{h_i}^{s_1}) = 0 \quad \forall i \in [N],
\end{align}
where $q$ represents a distribution over $\mathcal{H}$. 
\subsection{Imposing Fairness via Soft Constraints} In general, there are two ways of treating the $N$ constraints $q_id_{h_i}=0$ in Problem \eqref{eq:alm_twoplayer} viz., \begin{enumerate*}[label=(\roman*)]\item  as {\em hard constraints}; or \item as {\em soft constraints}.
\end{enumerate*} Algorithms that can handle explicit constraints efficiently require access to an efficient oracle that can minimize a linear or quadratic function over the feasible set in {\em each} iteration. Consequently, algorithms that incorporate hard constraints come with high per-iteration computational cost since the number of constraints is (at least) linear in $N$, and is not applicable in large scale settings. Hence, we propose to use algorithms that incorporate fairness as soft constraints. With these two minor modifications, we will now describe our approach to solve problem \eqref{eq:alm_twoplayer}.

\section{Fair Posterior from Proximal Dual}
Following the reductions approach in \cite{agarwal2018reductions}, we first write the Lagrangian dual problem of DP constrained risk minimization problem \eqref{eq:alm_twoplayer} using dual variables $\lambda$ as,
\begin{align}
\label{eq: randomized_almminmax}
\max_{\lambda\in\R^N}\min_{q\in\Delta} L(q, \lambda):= \langle q,  e_h \rangle  + \lambda \langle q, \mu_h^{s_0} - \mu_h^{s_1} \rangle 
\end{align}

{\bf Interpreting the Lagrangian.} Problem \ref{eq: randomized_almminmax} can be understood as a game between two players a $q$-player and a $\lambda$-player \cite{cotter2018optimization}. We recall an important fact regarding the dual problem \eqref{eq: randomized_almminmax}:\begin{fact}\label{rmk:dualns} The objective function of the dual problem \eqref{eq: randomized_almminmax}  is {\em always nonsmooth} with respect to  $\lambda$ because of the inner minimization problem in $q$.
\end{fact}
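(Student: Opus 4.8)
The plan is to evaluate the inner minimization in closed form and then read nonsmoothness off the resulting piecewise-linear structure. First, fix $\lambda$ and note that $q\mapsto L(q,\lambda)=\langle q,e_h\rangle+\lambda\langle q,\mu_h^{s_0}-\mu_h^{s_1}\rangle=\langle q,\,e_h+\lambda d_h\rangle$ is linear in $q$, so its minimum over the probability simplex $\Delta$ is attained at a vertex and
\begin{align}
g(\lambda):=\min_{q\in\Delta}L(q,\lambda)=\min_{i\in[N]}\big(e_{h_i}+\lambda\,d_{h_i}\big).
\end{align}
Thus $g$ is a pointwise minimum of $N$ affine functions of $\lambda$; equivalently, it is concave and piecewise linear. (Keeping one multiplier per constraint, $\lambda\in\R^N$, only replaces this by $g(\lambda)=\min_i(e_{h_i}+\lambda_i d_{h_i})$, still a minimum of affine maps, and the rest of the argument is unchanged.)

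Second, I would pin down exactly where $g$ is differentiable. On the interior of each linearity cell $C_i:=\{\lambda:\,e_{h_i}+\lambda d_{h_i}\le e_{h_j}+\lambda d_{h_j}\ \forall j\}$, the function $g$ agrees with the affine map $\lambda\mapsto e_{h_i}+\lambda d_{h_i}$ and is therefore smooth. At a boundary point $\lambda^\star$ where two indices $i\ne j$ are both active with $d_{h_i}\ne d_{h_j}$, moving to either side of $\lambda^\star$ along a direction that separates $C_i$ from $C_j$ selects the two distinct affine pieces, so the one-sided directional derivatives of $g$ at $\lambda^\star$ disagree; hence $g$ is not differentiable at $\lambda^\star$, i.e.\ it is nonsmooth.

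Third, I would show that such a kink always occurs (this is the content of ``always nonsmooth''). A concave piecewise-linear function is affine if and only if it has no kink, and $g=\min_i(e_{h_i}+\lambda d_{h_i})$ is affine precisely when a single line dominates all others for every $\lambda$, which --- since two lines of different slope always cross and neither then dominates the other globally --- forces all the conditional gaps $d_{h_i}$ to coincide. But if all $d_{h_i}$ are equal, the fairness constraint $q_i d_{h_i}=0$ is either vacuous ($d_{h_i}\equiv 0$, every classifier already DP-fair) or infeasible on $\Delta$ ($d_{h_i}\equiv c\ne 0$); excluding this degenerate configuration, $g$ is non-affine, hence has a breakpoint at which, by the second step (with $i,j$ two active pieces of differing slope), it fails to be differentiable. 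Therefore the dual objective is nonsmooth in $\lambda$.

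The main obstacle here is not any computation --- each step is elementary --- but making the quantifier ``always'' rigorous: one has to identify and exclude precisely the degenerate classifier families (all $h_i$ inducing the same conditional gap $d_h$) in which the constraint carries no information, and then argue that outside of them the lower envelope genuinely bends. That last point I would handle exactly as in the third step, via the concavity and extreme-point structure of $g$ rather than by case analysis on the relative positions of the $N$ lines.
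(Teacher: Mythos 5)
Your argument is correct, and it is worth noting that the paper itself offers no proof of this Fact --- it is asserted as a standard property of Lagrangian dual functions (the dual objective is a pointwise infimum of functions affine in $\lambda$, hence concave and generically nonsmooth; cf.\ the discussion around Fact~\ref{rmk:dualns} and \cite{duchi2012randomized}). Your derivation makes this explicit and elementary: since $L(\cdot,\lambda)$ is linear on $\Delta$, the inner minimum is attained at a vertex, $g(\lambda)=\min_i\bigl(e_{h_i}+\lambda d_{h_i}\bigr)$ is a lower envelope of $N$ lines, and a kink with two active pieces of distinct slope destroys differentiability. Your third step is also the honest part of the analysis that the paper glosses over: the quantifier ``always'' is literally false in the degenerate case where all $d_{h_i}$ coincide (then $g$ is affine, hence smooth), and you correctly characterize that this is exactly the case in which the DP constraint is either vacuous or infeasible, so the Fact holds in every non-degenerate instance. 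Two small points: your parenthetical about the vector-multiplier version $\lambda\in\R^N$ is consistent with the constraint set of Problem~\eqref{eq:alm_twoplayer}, but note the paper immediately collapses to a scalar $\lambda$ in \eqref{eq: randomized_almminmax}, so the scalar case is the one that matters; and your proof establishes nonsmoothness of $g$ \emph{somewhere}, which is what the statement claims, though the practical relevance (motivating the proximal term) comes from the standard additional observation that the maximizer of a non-affine concave piecewise-linear function sits at a kink. Neither point is a gap in what you set out to prove.
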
  Technically, there are two main reasons why optimizing nonsmooth functions can be challenging \cite{duchi2012randomized}: \begin{enumerate*}[label=(\roman*)]\item finding a descent direction in high dimensions $N$ can be challenging; and \item subgradient methods can be slow to converge in practice.\end{enumerate*} Due to these difficulties arising from Fact \ref{rmk:dualns}, using a first order algorithm such as gradient descent to solve the dual problem in \eqref{eq: randomized_almminmax} directly can be problematic, and may be  suboptimal. 

{\bf Accelerated optimization using Dual Proximal Functions.} To overcome the difficulties due to the nonsmoothness of the dual problem, we propose to {\em augment} the Lagrangian with a proximal term. 
Specifically, for some $\lambda_T$, the augmented Lagrangian function can be written as,
\begin{align}
\resizebox{0.9\hsize}{!}{$L_T(q, \lambda) = \langle q, e_h \rangle  + \lambda \langle q, \mu_h^{s_0} - \mu_h^{s_1} \rangle - \frac{1}{2\eta} (\lambda - \lambda_T)^2$}\label{eq:alprox}
\end{align}
Note that, as per our simplified notation, $L_T \equiv L_{\lambda_T}$. The following lemma relates the standard Lagrangian in \eqref{eq: randomized_almminmax} with its proximal counterpart in \eqref{eq:alprox}. 
\begin{lemma}
	\label{lemma: alm_vs_lm}
	At the optimal solution $(q^*,\lambda^*)$ to $L$, we have $\max_{\lambda} \min_{q\in\Delta} L = \max_{\lambda}\min_{q\in\Delta} L_{\lambda^*}$.
\end{lemma}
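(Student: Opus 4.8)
The plan is to show that augmenting the Lagrangian with a proximal term centered exactly at the dual optimum $\lambda^*$ perturbs neither the optimal value nor the optimal multiplier of the minimax in \eqref{eq: randomized_almminmax}. First I would introduce the dual function
\begin{align}
g(\lambda) := \min_{q\in\Delta} L(q,\lambda).
\end{align}
For each fixed $q$ the map $\lambda\mapsto L(q,\lambda)$ is affine, and $\Delta$ is nonempty and compact with $L(\cdot,\lambda)$ continuous, so the minimum defining $g$ is attained and $g$ is concave as a pointwise infimum of affine functions. By the hypothesis that $(q^*,\lambda^*)$ is an optimal solution to $\max_\lambda \min_{q\in\Delta} L$, the multiplier $\lambda^*$ maximizes $g$ and $\max_\lambda\min_{q\in\Delta} L(q,\lambda) = g(\lambda^*)$.

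The crux is the observation that the proximal term in \eqref{eq:alprox} with center $\lambda_T = \lambda^*$ is constant in $q$, so the inner minimization commutes with subtracting it:
\begin{align}
\min_{q\in\Delta} L_{\lambda^*}(q,\lambda) = g(\lambda) - \frac{1}{2\eta}(\lambda-\lambda^*)^2 ,
\end{align}
for every $\lambda$. Since $\eta>0$, the squared term is nonnegative, so $\min_{q\in\Delta} L_{\lambda^*}(q,\lambda) \le g(\lambda) \le g(\lambda^*)$ for all $\lambda$ (the last inequality because $\lambda^*$ maximizes $g$); taking the supremum over $\lambda$ gives $\max_\lambda\min_{q\in\Delta} L_{\lambda^*}(q,\lambda) \le g(\lambda^*)$. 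Conversely, evaluating the displayed identity at $\lambda=\lambda^*$ kills the penalty and yields $\min_{q\in\Delta} L_{\lambda^*}(q,\lambda^*) = g(\lambda^*)$, hence $\max_\lambda\min_{q\in\Delta} L_{\lambda^*}(q,\lambda) \ge g(\lambda^*)$. Combining the two bounds with the previous paragraph gives
\begin{align}
\max_\lambda\min_{q\in\Delta} L_{\lambda^*}(q,\lambda) = g(\lambda^*) = \max_\lambda\min_{q\in\Delta} L(q,\lambda),
\end{align}
which is the claim (and along the way shows $\lambda^*$ is still the outer maximizer).

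I do not expect a genuine obstacle: once the $q$-independence of the proximal term is noted, the proof is just a two-sided estimate against $g(\lambda^*)$. The only points deserving care are the existence of the dual maximizer $\lambda^*$ --- exactly what the phrase ``at the optimal solution $(q^*,\lambda^*)$'' in the statement provides --- and the attainment of $\min_{q\in\Delta}$, which follows from compactness of $\Delta$ and continuity of $L$ in $q$. The argument is verbatim the same for a vector multiplier $\lambda\in\R^N$, with $(\lambda-\lambda^*)^2$ replaced by $\|\lambda-\lambda^*\|_2^2$.
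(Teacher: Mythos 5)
Your argument is correct and is precisely the spelled-out version of what the paper invokes: the paper offers no written proof, only the remark that this is ``a standard property of proximal objective functions'' with $\lambda^*$ a fixed point of the proximal step (citing Parikh--Boyd, \S 2.3), and your two-sided bound --- penalty $\le 0$ everywhere, penalty $=0$ at $\lambda=\lambda^*$ where $g$ is maximized --- is exactly that fixed-point property made explicit. No gap; the only hypotheses you use (existence of the dual optimizer and attainment of the inner minimum over the compact simplex) are the ones the lemma statement already supplies.
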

This is a standard property of proximal objective functions, where $\lambda^*$ forms a fixed point of $\min_{q\in \Delta} L_{\lambda^*}(q, \lambda^*)$ (section 2.3 of \cite{parikh2014proximal}). Intuitively, Lemma \ref{lemma: alm_vs_lm} states that $L$ and $L_T$ are not at all different for optimization purposes.
\begin{remark}While the augmented Lagrangian $L_T$ still may be nonsmooth, the proximal (quadratic) term can be exploited to design {\em provably} faster optimization algorithms as we will see shortly. 
\end{remark}

\begin{algorithm}[t]
	\caption{FairALM: Linear Classifier}
	\label{alg:fair_alm_linear}
	\begin{algorithmic}[1]
		\STATE {\em Notations:} Dual step size $\eta$ \\ \quad $h_t \in \{h_1, h_2, \hdots, h_N \}$. 
		\STATE {\em Input:} Error Vector $e_\mathcal{H}$,\\ \quad Conditional mean vector $\mu_\mathcal{H}^{s}$\\
		\STATE {\em Initializations: $\lambda_0 = 0$}
		\FOR{$t = 0, 1, 2, ...,T$}
		\STATE (Primal) $h_t \leftarrow \argmin_i (e_{h_i} + \lambda_t (\mu_{h_i}^{s_0} - \mu_{h_i}^{s_1}))$ \label{alg: ht_update}
		\STATE (Dual) $\lambda_{t+1} \leftarrow \lambda_t + \eta (\mu_{h_t}^{s_0} - \mu_{h_t}^{s_1}) / t$ \label{alg: lambda_t update}
		\ENDFOR
		\STATE {\em Output:} $h_T$
	\end{algorithmic}
\end{algorithm}

\section{Our Algorithm -- FairALM}
{It is  common \cite{agarwal2018reductions,cotter2018optimization,kearns2017preventing} to consider the minimax problem in \eqref{eq:alprox} as a zero sum game between the $\lambda$-player and the $q$-player. The Lagrangian(s) $L_T$ (or $L$) specify the cost which the $q$-player pays to the $\lambda$-player after the latter makes its choice. An iterative procedure leads to a regret minimizing strategy for the $\lambda$-player \cite{shalev2012online} and a best response strategy for the $q$-player \cite{agarwal2018reductions}. While the $q$-player's move relies on the availability of an efficient \textit{oracle} to solve the minimization problem, $L_T(q, \lambda)$, being a linear program in $q$ makes it less challenging. We describe our algorithm in Alg.~\ref{alg:fair_alm_linear} and call it \textit{FairALM: Linear Classifier}.


\subsection{Convergence Analysis}
As the game with respect to $\lambda$ is a maximization problem, we get a reverse regret bound as shown in the following Lemma. Due to space, proofs appear in the Appendix.
\begin{lemma}
	\label{lemma: regretbound}
	Let $r_t$ denote the reward at each round of the game. The reward function $f_t(\lambda)$ is defined as $f_t(\lambda) = \lambda r_t - \frac{1}{2\eta} (\lambda - \lambda_t)^2$.
        We choose $\lambda$ in round $T+1$ to maximize the cumulative reward: $\lambda_{T+1} = \argmax_{\lambda} \sum_{t=1}^T f_t(\lambda)$.  Define $L = \max_t |r_t|$.   The following bound on the cumulative reward holds, for any $\lambda$
	\begin{align}
	\resizebox{0.9\hsize}{!}{$\sum_{t=1}^T \bigg(\lambda r_t - \frac{1}{2\eta} (\lambda - \lambda_t)^2 \bigg) \le \sum_{t=1}^T \lambda_t r_t + \frac{\eta}{2}L^2 \mathcal{O}(\log T)$}
	\end{align}	
\end{lemma}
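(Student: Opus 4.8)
\noindent\emph{Proof plan.} The plan is to recognize this as the textbook logarithmic‑regret guarantee for Follow‑the‑Leader (FTL) with a $\tfrac1\eta$‑strongly concave reward, specialised to the quadratic $f_t$.

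\textbf{Step 1 (reduction to a regret statement).} First I would rewrite both sides in terms of $f_t$. The left‑hand side is exactly $\sum_{t=1}^T f_t(\lambda)$ for an arbitrary fixed $\lambda$, and, since the proximal penalty $\tfrac{1}{2\eta}(\lambda-\lambda_t)^2$ vanishes at $\lambda=\lambda_t$, we have $\sum_{t=1}^T \lambda_t r_t = \sum_{t=1}^T f_t(\lambda_t)$. Hence the claim is equivalent to the regret bound
\begin{align}
\mathrm{Regret}_T := \sup_{\lambda}\sum_{t=1}^T f_t(\lambda) - \sum_{t=1}^T f_t(\lambda_t) \;\le\; \tfrac{\eta}{2}L^2\,\mathcal{O}(\log T),
\end{align}
where each $f_t$ is a concave quadratic that is $\tfrac1\eta$‑strongly concave and satisfies $f_t'(\lambda_t)=r_t$.

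\textbf{Step 2 (closed form of the iterates).} Write $\Phi_t(\lambda):=\sum_{s=1}^t f_s(\lambda)$, so that $\lambda_{t+1}=\argmax_\lambda\Phi_t(\lambda)$. Setting $\Phi_t'(\lambda)=0$ gives $\lambda_{t+1}=\tfrac1t\big(\eta\sum_{s=1}^t r_s+\sum_{s=1}^t\lambda_s\big)$, and telescoping this recursion (using the initialisation $\lambda_1=\lambda_0=0$) collapses it to $\lambda_{t+1}=\lambda_t+\eta r_t/t$, which is precisely the dual update in Algorithm~\ref{alg:fair_alm_linear}. In particular the iterates are stable: $|\lambda_{t+1}-\lambda_t|=\eta|r_t|/t\le \eta L/t$.

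\textbf{Step 3 (Be‑the‑Leader and the per‑round gap).} Next I would invoke the Be‑the‑Leader inequality: by induction on $T$, using at each step that $\lambda_{t+1}$ maximises $\Phi_t$, one gets $\sum_{t=1}^T f_t(\lambda_{t+1})\ge \sup_\lambda\Phi_T(\lambda)$, hence $\mathrm{Regret}_T\le \sum_{t=1}^T\big(f_t(\lambda_{t+1})-f_t(\lambda_t)\big)$. Since $f_t$ is an exact quadratic with slope $r_t$ at $\lambda_t$ and curvature $1/\eta$, the second‑order Taylor expansion is exact and, substituting $\lambda_{t+1}-\lambda_t=\eta r_t/t$ from Step~2,
\begin{align}
f_t(\lambda_{t+1})-f_t(\lambda_t)= r_t(\lambda_{t+1}-\lambda_t)-\tfrac{1}{2\eta}(\lambda_{t+1}-\lambda_t)^2 \;=\; \frac{\eta r_t^2}{t}-\frac{\eta r_t^2}{2t^2}\;\le\;\frac{\eta L^2}{t}.
\end{align}
Summing over $t$ gives $\mathrm{Regret}_T\le \eta L^2\sum_{t=1}^T \tfrac1t = \eta L^2 H_T = \tfrac{\eta}{2}L^2\,\mathcal{O}(\log T)$, which is the stated inequality.

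\textbf{Main obstacle.} The genuine analysis is short; the care is in the bookkeeping. Each reward $f_t$ itself depends on $\lambda_t$, so the sequence $(f_t,\lambda_t)$ is defined self‑referentially — but once the whole trajectory has been generated the $f_t$ are fixed quadratics and Be‑the‑Leader applies verbatim; the $t=1$ term needs the ``leader of the empty history'' to be pinned by the initialisation $\lambda_1=0$. The point worth flagging is that the $f_t$ are \emph{not} globally Lipschitz (the proximal term has unbounded gradient), so one cannot quote a generic ``Lipschitz + stability'' regret bound as a black box; it is exactly the closed‑form step $\lambda_{t+1}-\lambda_t=\eta r_t/t$ and the resulting harmonic — hence logarithmic — sum that make the argument go through.
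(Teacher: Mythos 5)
Your proof is correct and follows essentially the same route as the paper's: reduce to an FTL regret statement, apply the Be-the-Leader inequality, derive the closed-form update $\lambda_{t+1}-\lambda_t=\eta r_t/t$ from the first-order condition, bound the per-round gap by $\eta r_t^2/t$, and sum the harmonic series. Your additional remarks (the exactness of the quadratic Taylor expansion and the caveat that the $f_t$ are not globally Lipschitz) are sound but do not change the argument, which matches the paper's proof step for step.
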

The above lemma indicates that the cumulative reward grows in time as $\mathcal{O}(\log T)$. The proximal term in the augmented Lagrangian gives us a {\em better} bound than an $\ell_2$ or an entropic regularizer (which provides a $\sqrt{T}$ bound \cite{shalev2012online}).

Next, we evaluate the cost function $L_T(q, \lambda)$ after $T$ rounds of the game. We observe that the average play of both the players converges to a saddle point with respect to $L_T(q, \lambda)$. We formalize this in the following theorem,
\begin{theorem}
	Recall that $d_h$ represents the difference of conditional means. Assume that $|| d_h||_{\infty} \le L$ and consider $T$ rounds of the game described above. Let the average plays of the $q$-player be $\bar q = \frac{1}{T}\sum_{t=1}^T q_t$ and the $\lambda$-player be $\bar \lambda = \frac{1}{T}\sum_{t=1}^T \lambda_t$. Then under the following conditions on $q$, $\lambda$ and $\eta$, we have $L_T(\bar q, \bar \lambda) \le L_T(q, \bar \lambda) + \nu \text{ and }	L_T(\bar q, \bar \lambda) \ge L_T(\bar q, \lambda) - \nu$
	\begin{compactitem}
	\item If $\eta = \mathcal{O}(\sqrt{\frac{B^2T}{L^2 (\log T+ 1)}})$, $\nu = \mathcal{O}(\sqrt{\frac{B^2 L^2 (\log T + 1)}{T}})$; $\forall |\lambda| \le B$,  $\forall q \in \Delta$
	\item If $\eta = \frac{1}{T}$, $\nu = \mathcal{O}(\frac{L^2(\log T + 1)^2}{T})$; $\forall \lambda \in \R$, $\forall q \in \Delta$
\end{compactitem}
\end{theorem}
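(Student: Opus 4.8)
The plan is to show this is a standard saddle-point averaging argument built on top of the regret bound in Lemma~\ref{lemma: regretbound}, combined with the fact that the $q$-player plays a best response each round. I would start by unpacking the two defining inequalities: the first, $L_T(\bar q,\bar\lambda)\le L_T(q,\bar\lambda)+\nu$ for all $q\in\Delta$, is the statement that $\bar q$ is an approximate minimizer of $L_T(\cdot,\bar\lambda)$; the second, $L_T(\bar q,\bar\lambda)\ge L_T(\bar q,\lambda)-\nu$ for all admissible $\lambda$, is the statement that $\bar\lambda$ is an approximate maximizer. I would treat the two inequalities separately, and in each case exploit the bilinearity of $L_T$ in $(q,\lambda)$ (the proximal term $-\frac{1}{2\eta}(\lambda-\lambda_T)^2$ is the only nonlinearity, and it only involves $\lambda$).

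For the $\lambda$-side inequality, the key identity is that with $r_t := \langle q_t,\mu_h^{s_0}-\mu_h^{s_1}\rangle = d_{q_t}$, the per-round reward $f_t(\lambda)=\lambda r_t-\frac{1}{2\eta}(\lambda-\lambda_t)^2$ matches exactly the $\lambda$-dependent part of $L_T(q_t,\lambda)$ up to the ($\lambda$-independent) term $\langle q_t,e_h\rangle$. Summing over $t$, dividing by $T$, and using Jensen/linearity in $q$ to replace $\frac1T\sum_t\langle q_t,e_h\rangle + \frac1T\sum_t f_t(\lambda)$ by $L_T(\bar q,\lambda)$ on one side; on the other side Lemma~\ref{lemma: regretbound} gives $\frac1T\sum_t f_t(\lambda)\le \frac1T\sum_t\lambda_t r_t + \frac{\eta}{2}L^2\mathcal{O}(\log T)/T$, and the concavity of $\lambda\mapsto -\frac{1}{2\eta}(\lambda-\lambda_t)^2$ lets me lower-bound $\frac1T\sum_t\big(\lambda_t r_t-\frac{1}{2\eta}(\lambda_t-\lambda_t)^2\big)=\frac1T\sum_t\lambda_t r_t$ by... here I need to be a little careful: I want to recognize $\frac1T\sum_t\big(\lambda_t r_t+\langle q_t,e_h\rangle\big)$ as $\ge L_T(\bar q,\bar\lambda)$ up to a convexity correction, which follows because $L_T(q_t,\lambda_t)=\langle q_t,e_h\rangle+\lambda_t r_t-0$ and $\lambda\mapsto L_T(\bar q,\lambda)$ is concave so $\frac1T\sum_t L_T(\bar q,\lambda_t)\le L_T(\bar q,\bar\lambda)$, while the $q$-linearity gives $\frac1T\sum_t L_T(q_t,\lambda_t)$ versus $\frac1T\sum_t L_T(\bar q,\lambda_t)$ needs the best-response property. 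Assembling these yields $L_T(\bar q,\lambda)\le L_T(\bar q,\bar\lambda)+\nu$ with $\nu=\frac{\eta}{2T}L^2\mathcal{O}(\log T)$ plus the concavity slack, which is of the same order.

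For the $q$-side inequality I would use that $h_t$ (equivalently $q_t$, the point mass on $h_t$) is chosen in Alg.~\ref{alg:fair_alm_linear} as the exact minimizer of $q\mapsto L_T(q,\lambda_t)=\langle q,e_h\rangle+\lambda_t\langle q,\mu_h^{s_0}-\mu_h^{s_1}\rangle$ over $\Delta$, so $L_T(q_t,\lambda_t)\le L_T(q,\lambda_t)$ for every $q\in\Delta$; averaging over $t$, using linearity in $q$ on the right and concavity in $\lambda$ to pass from $\frac1T\sum_t L_T(q,\lambda_t)$ to $L_T(q,\bar\lambda)$ (this direction is the ``wrong'' way for concavity, so I instead keep $\frac1T\sum_t L_T(q,\lambda_t)$ and bound the gap to $L_T(q,\bar\lambda)$ by the proximal-term variance, again $\mathcal{O}(L^2\eta\log T)$ since consecutive $\lambda_t$ move by $\eta r_t/t$), and on the left recognizing $\frac1T\sum_t L_T(q_t,\lambda_t)\ge L_T(\bar q,\bar\lambda)$ minus a convexity slack. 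Finally I would plug in the two stated choices of $\eta$: $\eta=\mathcal{O}\!\big(\sqrt{B^2T/(L^2(\log T+1))}\big)$ balances the regret term $\frac{\eta L^2\log T}{T}$ against the range term $\frac{B^2}{\eta T}$ coming from restricting $|\lambda|\le B$ (this is where the $B$ enters — the $\lambda$-player's cumulative suboptimality against a fixed comparator is controlled only on a bounded set, contributing $B^2/(2\eta)$), giving $\nu=\mathcal{O}(\sqrt{B^2L^2(\log T+1)/T})$; and $\eta=1/T$ makes the regret term $\mathcal{O}(L^2(\log T+1)^2/T)$ directly with no range restriction needed, since then the proximal weight is large enough to keep all iterates near $\lambda_0$ automatically. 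The main obstacle I anticipate is bookkeeping the direction of the concavity/convexity corrections so that every slack term genuinely lands at order $\nu$ rather than something larger, and in particular verifying that the $1/t$ step-size decay in line~\ref{alg: lambda_t update} is exactly what Lemma~\ref{lemma: regretbound}'s $\mathcal{O}(\log T)$ bound requires — i.e. matching the algorithm's update rule to the $\argmax$ characterization of $\lambda_{T+1}$ in that lemma.
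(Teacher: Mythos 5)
Your overall strategy is the same as the paper's: combine the FTL regret bound of Lemma \ref{lemma: regretbound} (applied with $r_t=d_{h_t}$) with the exact best-response property of the $q$-player in Step \ref{alg: ht_update}, average over $t$, and use Jensen's inequality on the quadratic proximal term to pass between $\frac{1}{T}\sum_t L_T(\cdot,\lambda_t)$ and $L_T(\cdot,\bar\lambda)$. The paper packages the resulting slack into an explicit correction $\zeta_T(\lambda)=\frac{1}{2\eta}\sum_{t=1}^T\bigl((\lambda-\lambda_t)^2-(\lambda-\lambda_T)^2+(\lambda_t-\lambda_T)^2\bigr)$ and derives a single two-sided chain $L_T(q,\bar\lambda)\ge L_T(\bar q,\lambda)-\zeta_T(\lambda)/T-\xi_T/T$ from which both saddle-point inequalities follow, so your skeleton is essentially correct.

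The genuine problem is your accounting of the ``range term'' in the bounded case. The proximal mismatch is a sum of $T$ terms, each of size $\Theta(B^2/\eta)$ when $|\lambda|,|\lambda_t|\le B$, so after dividing by $T$ it contributes $B^2/\eta$, \emph{not} $B^2/(\eta T)$ as you wrote. This matters: balancing your two terms $\frac{\eta L^2\log T}{T}$ and $\frac{B^2}{\eta T}$ would give $\eta=\Theta\bigl(B/(L\sqrt{\log T})\bigr)$, constant in $T$, and a rate $\nu=\Theta(BL\sqrt{\log T}/T)$ --- inconsistent with both the $\eta=\mathcal{O}\bigl(\sqrt{B^2T/(L^2(\log T+1))}\bigr)$ and the $\nu=\mathcal{O}\bigl(\sqrt{B^2L^2(\log T+1)/T}\bigr)$ you then assert. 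The correct balance $\frac{B^2}{\eta}=\frac{\eta L^2(\log T+1)}{T}$ is exactly what forces $\eta\sim\sqrt{T}$ and yields the stated $\nu$; you are implicitly pattern-matching to the standard FTRL reduction where the regularizer's $B^2/(2\eta)$ is paid once in the cumulative regret, whereas here the quadratic sits inside the payoff $L_T$ itself and is paid every round. Relatedly, in the $\lambda\in\R$ case the dominant $L^2(\log T+1)^2/T$ term is not the regret term $\xi_T/T=\eta L^2(\log T+1)/T$ (which is $O(T^{-2})$ when $\eta=1/T$) but the accumulated proximal mismatch evaluated at the unconstrained maximizer $\lambda_*=\lambda_T+\eta\sum_i\bar q_i d_{h_i}$, bounded via the telescoped iterate drift $|\lambda_T-\lambda_t|\le\eta L(\log T+1)$; you gesture at this drift argument but assign it order $\eta L^2\log T$ rather than $\eta L^2(\log T+1)^2$ after squaring. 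These are fixable bookkeeping errors, but as written neither balancing step produces the theorem's constants.
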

The above theorem indicates that the average play of the $q$-player and the
$\lambda$-player reaches a $\nu$-approximate saddle point. Our bounds for $\nu = \frac{1}{T}$ and $\lambda \in \R$ are strictly better than \cite{agarwal2018reductions}.\\

\begin{algorithm}[t]
	\caption{FairALM: DeepNet Classifier}
	\label{alg:deepnets_alg}
	\small
	\begin{algorithmic}[1]
		\STATE {\em Notations:} Dual step size $\eta$, Primal step size $\tau$
		\STATE {\em Input:}  Training Set $D$
		\STATE {\em Initializations:} $\lambda_0=0$,  $w_0$ 
		\FOR{$t = 0, 1, 2, ...,T$}
		\STATE Sample $z \sim D$
		\STATE  Pick $v_t \in \partial \Big(\hat e_{h_w}(z) + (\lambda_t + \eta) \hat \mu_{h_w}^{s_0} (z) - (\lambda_t-\eta) \hat \mu_{h_w}^{s_1} (z) \Big) $ 
		\STATE (Primal) $w_t \leftarrow w_{t-1} - \tau v_t$ \label{alg:supp_ht_update}
		\STATE (Dual) $\lambda_{t+1} \leftarrow \lambda_t + \eta (\hat \mu_{h_{w_t}}^{s_0}(z) - \hat \mu_{h_{w_t}}^{s_1}(z)) $ \label{alg:supp_lambda_t update}
		\ENDFOR
		\STATE {\em Output:} $w_T$
	\end{algorithmic}
\end{algorithm}

\subsection{Can we train Fair Deep Neural Networks by adapting Alg. \ref{alg:fair_alm_linear}?} 
The key difficulty from the analysis standpoint we face in extending these 
results to the deep networks setting is that the number of classifiers 
$|\mathcal{H}|$ may be exponential in number of nodes/layers. 
This creates a potential problem in computing Step~\ref{alg: ht_update} of Algorithm~\ref{alg:fair_alm_linear} -- if viewed mechanistically, is not practical since an epsilon net over the family $\mathcal{H}$ (representable by a neural network) is exponential in size. 
Interestingly, notice that 
we often use over-parameterized networks for learning. This is 
a useful fact here because it means that there exists 
a solution where 
$\argmin_i (e_{h_i} + \lambda_t d_{h_i})$ is $0$. While iterating through 
all $h_i$s will be intractable, we may still able to obtain a solution via standard 
stochastic gradient descent (SGD) procedures \cite{zhang2016understanding}. The only
unresolved question then is if we can do posterior inference and obtain classifiers that are ``fair''. 
It turns out that the above procedure provides us an 
approximation if we leverage two facts: first, SGD can find the minimum of $L(h,\lambda)$ with respect to $h$ and second, 
recent results show that SGD, in fact, performs variational inference, implying 
that the optimization provides an approximate posterior \cite{chaudhari2018stochastic}.  Having discussed the issue
of the exponential sized $|\mathcal{H}|$ -- for which we settle for an approximate posterior -- we make three additional adjustments
to the algorithm to make it suitable for training deep networks.
First, the non-differentiable indicator function $\mathbbm{1}[\cdot]$ is replaced with a smooth surrogate function (such as a logistic function). Second, as it is hard
to evaluate  $e_h/\mu_h^{s}$ due to unavailability of the true data distribution, we instead calculate their empirical estimates
$z = (x; y; s)$, and denote it by $\hat e_{h}(z)/\hat \mu_{h}^{s}(z)$. Third, by exchanging the ``$\max$'' and ``$\min$'' in \eqref{eq: randomized_almminmax}, we
obtain an objective that {\em upper-bounds} our current objective in \eqref{eq: randomized_almminmax}. This provides us with a closed-form solution to $\lambda$ thus reducing the minmax
objective to a single simpler minimization problem. We present the algorithm for deep neural network training
in Alg.~\ref{alg:deepnets_alg} and call it \textit{FairALM: DeepNet Classifier.}

\section{Experiments}
A central theme in our experiments is to assess whether our proposed algorithm, FairALM,  can indeed obtain 
meaningful fairness measure scores {\em without} compromising the test set performance.
We evaluate FairALM on a number of problems where the dataset reflects certain
inherent societal/stereotypical biases.
Our evaluations are also designed with a few additional goals in mind.\\\\
{\bf Overview.} Our {\bf first} experiment on the CelebA dataset seeks to predict the value of 
a label for a face image while controlling for certain protected attributes
(gender, age). 
We discuss how prediction of some labels is {\em unfair} in an unconstrained model
and contrast with our FairALM.
Next, we focus on the label where predictions are the most unfair and
present comparisons against methods available in the literature.
For our {\bf second} experiment, we use the ImSitu dataset where images
correspond to a situation (activities, verb).
Expectedly, some activities such as driving or cooking are more strongly
associated with a specific gender.
We inspect if an unconstrained model is {\em unfair} when we ask it to learn to predict 
two gender correlated activities/verbs.
Comparisons with baseline methods will help measure FairALM's strengths/weaknesses.
We can use heat map visualizations
to qualitatively interpret the value of adding fairness constraints. 
We threshold the heat-maps to get an understanding of a general behavior of the models.
Our  {\bf third} experiment addresses an important problem in
medical/scientific studies. Small sample sizes necessitate
pooling data from multiple sites or scanners \cite{zhou2018statistical},
but introduce a site or scanner
specific nuisance variable which must be controlled for --
else a deep (also, shallow) model may cheat and use site specific (rather than
disease-specific) artifacts in the images for prediction
even when the cohorts 
are age or gender matched \cite{inproceedings_fafp}. 
We study one simple setting here: we use FairALM 
to mitigate site (hospital) specific differences in predicting
``tuberculosis'' from X-ray images acquired at
two hospitals, Shenzhen and Montgomery (and recently made publicly available \cite{jaeger2014two}). 

In all the experiments, we impose Equality of Opportunity (EO) constraint (defined in Section~\ref{sec:EO}). We adopt NVP (novel validation procedure) used in \cite{donini2018empirical} to evaluate FairALM. It is a two-step procedure: first,
we search for the hyper-parameters that achieve the best accuracy, and then, we report the minimum fairness measure (DEO) for accuracies within $90\%$ of the highest accuracy. This offers some robustness of the reported numbers
to hyper-parameter selection. We describe these experiments one by one.\\\\
{\bf \textit{Remark from authors.} }
Certain attributes 
such as \textit{attractiveness}, obtained via crowd-sourcing,
may have socio-cultural ramifications.
Similarly, the gender attribute in the dataset is binary (male versus female)
which may be insensitive to some readers. 
We clarify that our goal is to present evidence showing that our algorithm
can impose fairness in a sensible way on datasets used in
the literature rather than the higher level question of whether our community needs to invest in 
culturally sensitive datasets with more societally relevant themes. 

\begin{table}[!t]
	\centering
	\resizebox{0.9\columnwidth}!{
		\begin{tabular}{cgg@{\hskip 0.2in}cgg@{\hskip 0.2in}}
			\multicolumn{3}{c}{Protected: \textbf{GENDER}} & \multicolumn{3}{c}{Protected: \textbf{YOUNG}} \\ 
			\cmidrule(lr){1-3}\cmidrule(lr){4-6} \morecmidrules	\cmidrule(lr){1-3}\cmidrule(lr){4-6}
			Label & \textbf{U} & \textbf{F} & Label & \textbf{U} & \textbf{F} \\
			\cmidrule(lr){1-3}\cmidrule(lr){4-6} \morecmidrules	\cmidrule(lr){1-3}\cmidrule(lr){4-6}
			Attractive & 28 & 3 & Attractive & 8 & 1 \\ \cmidrule(lr){1-3}\cmidrule(lr){4-6}
			Bangs & 4 & 2 & Heavy Makeup & 11 & 1 \\ \cmidrule(lr){1-3}\cmidrule(lr){4-6}
			High Cheekbones & 18 & 0 & High Cheekbones & 7 & 0 \\ \cmidrule(lr){1-3}\cmidrule(lr){4-6}
			Mouth Slightly open & 11 & 3 & Male & 6 & 0 \\ \cmidrule(lr){1-3}\cmidrule(lr){4-6}
			Smiling & 10 & 0 & Wearing Lipstick & 12 & 4 \\ \cmidrule(lr){1-3}\cmidrule(lr){4-6}			
			\cmidrule(lr){1-3}\cmidrule(lr){4-6} \morecmidrules	\cmidrule(lr){1-3}\cmidrule(lr){4-6}
		\end{tabular}
	}
	\caption{ \label{tab:celeba_ablation}\footnotesize \textbf{Identifying Unfair Labels in CelebA dataset.} We report the DEO measure for the Unconstrained model (\textbf{U}) and FairALM model (\textbf{F}). Using a 3-layers ReLU network, we determine the labels in CelebA dataset that are biased with respect to gender (left) and the attribute young (right). Labels with a precision of at least $70\%$ and a DEO of at least $4\%$ on the unconstrained model are reported here.}
	\vskip -0.2in
\end{table}

\subsection{CelebA dataset}
{\bf Data and Setup.} CelebA \cite{liu2018large} consists of $200$K celebrity face images from the internet annotated by a group of paid adult participants \cite{bohlen2017server}. There are up to $40$ labels available in the dataset, each of which is binary-valued.
\begin{figure}[!b]
	\centering
	\frame{
		\includegraphics[width=0.48\columnwidth]{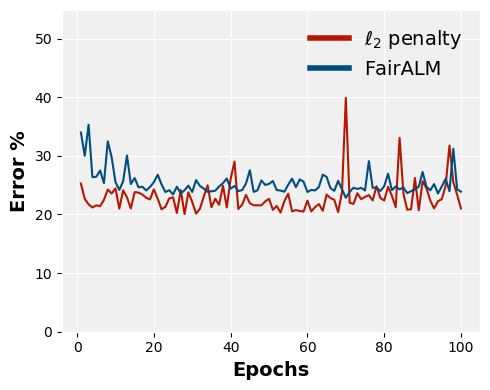}
		\hfill
		\includegraphics[width=0.48\columnwidth]{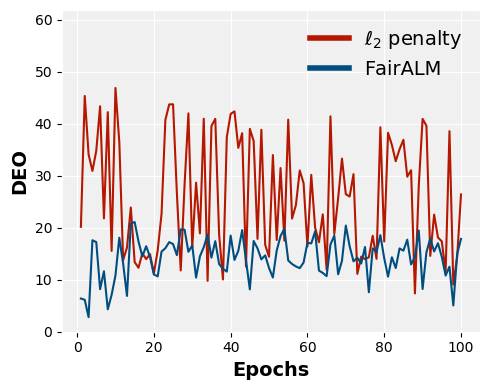}
	}
	\caption{\label{fig:l2_penalty} \footnotesize \textbf{Comparison to $\ell_2$ penalty .} FairALM has a stable training profile in comparison to naive $\ell_2$ penalty. The target label is \textit{attractiveness} and protected attribute is \textit{gender}. }
\end{figure}

{\bf Quantitative results.} We begin our analysis by predicting each of the $40$ labels with a $3$-layer ReLU network. The protected variable, $s$,
are the binary attributes like \textit{Male} and \textit{Young} representing gender and age respectively. We train the SGD algorithm for $5$-epochs and select the labels predicted with at least at $70\%$ precision and with a DEO of at least $4\%$ across the protected variables. The biased set of labels thus estimated are shown in Table~\ref{tab:celeba_ablation}. These labels are consistent with other reported results \cite{ryu2017inclusivefacenet}.
It is important to bear in mind that the bias in the labels should not be attributed to its relatedness to a specific protected attributed alone. The cause of bias could also be due to the skew in the label distributions. When training a $3$-layer ReLU net with FairALM, the precision of the model remained almost the same ($\pm 5\%$ ) while the DEO measure reduced significantly as indicated in the Table~\ref{tab:celeba_ablation}. Next, choosing the most unfair
label in Table~\ref{tab:celeba_ablation} (i.e., attractive), we train a ResNet18 for a longer duration of about $100$ epochs and contrast the performance with a simple $\ell_2$-penalty baseline. The training profile is observed to be more stable for FairALM as indicated in Fig.~\ref{fig:l2_penalty}. This finding is consistent with the seminal works such as \cite{bertsekas2014constrained,nocedal2006numerical} that discuss the ill-conditioned landscape of non-convex penalties. Comparisons to more recent works such as \cite{sattigeri2018fairness,quadrianto2019discovering} is provided in Table~\ref{tab:celeba_sota}. Here, we present a new state-of-the-art result for the DEO measure with the label \textit{attractive} and protected attribute \textit{gender}.
\begin{figure}[!t]
	\vskip -0.2in
	{\setlength{\fboxsep}{4pt}\fbox{
			\begin{minipage}{0.12\linewidth}
				\figuretitle{\colorbox{mylightgrey}{Gender}}
				\adjustbox{cfbox=mygrey 2pt 0pt}{\includegraphics[width=\linewidth]{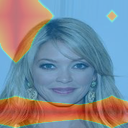}}
			\end{minipage}
			\begin{minipage}{0.12\linewidth}
				\figuretitle{\colorbox{mylightpink}{Unconstrained}}
				\adjustbox{cfbox=mypink 2pt 0pt}{\includegraphics[width=\linewidth]{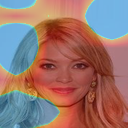}}
			\end{minipage}
			\begin{minipage}{0.12\linewidth}
				\figuretitle{\colorbox{mylightgreen}{FairALM}}
				\adjustbox{cfbox=mygreen 2pt 0pt}{\includegraphics[width=\linewidth]{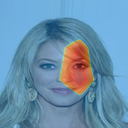}}
	\end{minipage}}}
	\hfill
	{\setlength{\fboxsep}{4pt}\fbox{
			\begin{minipage}{0.12\linewidth}
				\figuretitle{\colorbox{mylightgrey}{Gender}}
				\adjustbox{cfbox=mygrey 2pt 0pt}{\includegraphics[width=\linewidth]{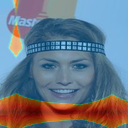}}
			\end{minipage}
			\begin{minipage}{0.12\linewidth}
				\figuretitle{\colorbox{mylightpink}{Unconstrained}}
				\adjustbox{cfbox=mypink 2pt 0pt}{\includegraphics[width=\linewidth]{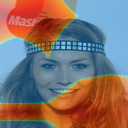}}
			\end{minipage}
			\begin{minipage}{0.12\linewidth}
				\figuretitle{\colorbox{mylightgreen}{FairALM}}
				\adjustbox{cfbox=mygreen 2pt 0pt}{\includegraphics[width=\linewidth]{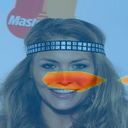}}
	\end{minipage}}}
	
	\vskip 4pt
	{\setlength{\fboxsep}{4pt}\fbox{
			\begin{minipage}{0.12\linewidth}
				\figuretitle{\colorbox{mylightgrey}{Gender}}
				\adjustbox{cfbox=mygrey 2pt 0pt}{\includegraphics[width=\linewidth]{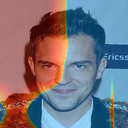}}
			\end{minipage}
			\begin{minipage}{0.12\linewidth}
				\figuretitle{\colorbox{mylightpink}{Unconstrained}}
				\adjustbox{cfbox=mypink 2pt 0pt}{\includegraphics[width=\linewidth]{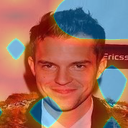}}
			\end{minipage}
			\begin{minipage}{0.12\linewidth}
				\figuretitle{\colorbox{mylightgreen}{FairALM}}
				\adjustbox{cfbox=mygreen 2pt 0pt}{\includegraphics[width=\linewidth]{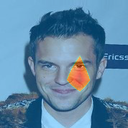}}
	\end{minipage}}}
	\hfill
	{\setlength{\fboxsep}{4pt}\fbox{
			\begin{minipage}{0.12\linewidth}
				\figuretitle{\colorbox{mylightgrey}{Gender}}
				\adjustbox{cfbox=mygrey 2pt 0pt}{\includegraphics[width=\linewidth]{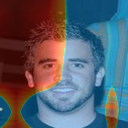}}
			\end{minipage}
			\begin{minipage}{0.12\linewidth}
				\figuretitle{\colorbox{mylightpink}{Unconstrained}}
				\adjustbox{cfbox=mypink 2pt 0pt}{\includegraphics[width=\linewidth]{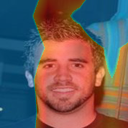}}
			\end{minipage}
			\begin{minipage}{0.12\linewidth}
				\figuretitle{\colorbox{mylightgreen}{FairALM}}
				\adjustbox{cfbox=mygreen 2pt 0pt}{\includegraphics[width=\linewidth]{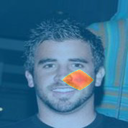}}
	\end{minipage}}}
	\caption{\footnotesize \textbf{Interpretable Models for CelebA.} Unconstrained/FairALM predict label \textit{attractiveness} while controlling \textit{gender}. The heatmaps of Unconstrained model overlaps with gender classification task indicating gender leak. FairALM consistently picks non-gender revealing features of the face. Interestingly, these regions are on the left side in accord with psychological studies that the Face's left side is more attractive \cite{Blackburn2012}.}
	\vskip -0.2in
\end{figure}
\begin{table}[!b]
	\centering
	\setlength{\tabcolsep}{5pt}
	\resizebox{0.8\columnwidth}{!}{
		\begin{tabular}[b]{cccc}
			\textbf{} & \begin{tabular}[c]{@{}c@{}}Fairness \\ GAN\cite{sattigeri2018fairness}\end{tabular} &
			\begin{tabular}[c]{@{}c@{}}Quadrianto \\ etal\cite{quadrianto2019discovering}\end{tabular} & \textbf{FairALM} \\ \hline\hline
			ERR & 26.6 & 24.1 & 24.5 \\ \hline
			\rowcolor[rgb]{0.89, 0.89, 1}DEO & 22.5 & 12.4 & \textbf{10.4} \\\hline
			FNR Female & 21.2 & 12.8 & \textbf{6.6} \\ \hline
			FNR Male & 43.7 & 25.2 & \textbf{17.0} \\ \hline\hline
	\end{tabular}}
	\vspace{12pt}
	\caption{\label{tab:celeba_sota}\footnotesize \textbf{Quantitative Results on CelebA.} FairALM attains a lower DEO measure and improves the testset errors (ERR). The target label is \textit{attractiveness} and protected attribute is \textit{gender}.}
\end{table}
\begin{figure*}[!t]
	\centering
	\begin{minipage}{\linewidth}
		\includegraphics[width=0.45\linewidth]{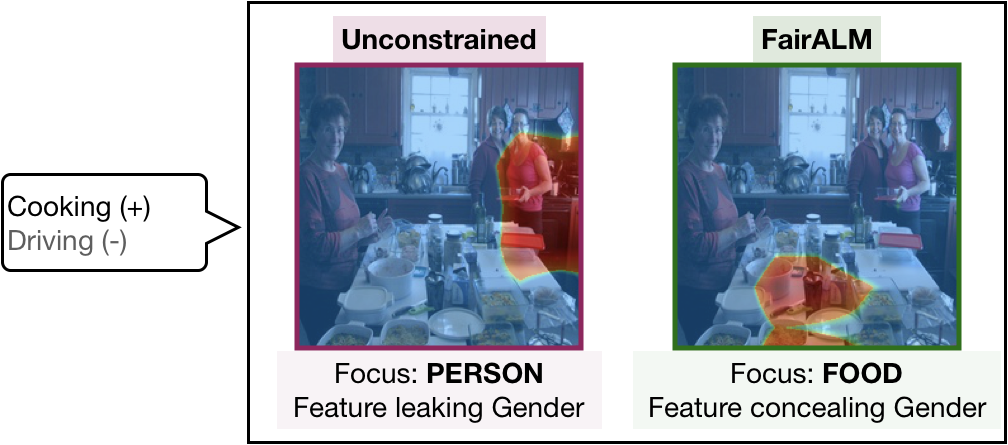} \hskip 8pt
		\includegraphics[width=0.45\linewidth]{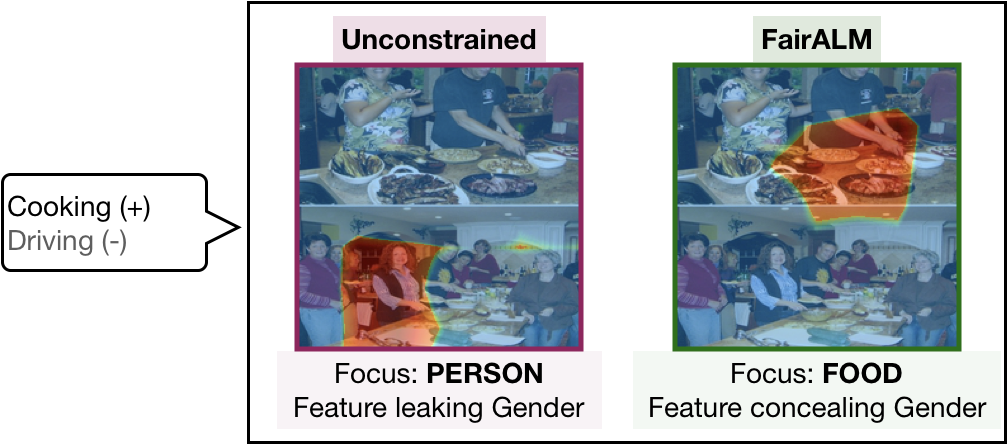}
	\end{minipage}%
	\vskip 4pt
	\begin{minipage}{\linewidth}
		\includegraphics[width=0.45\linewidth]{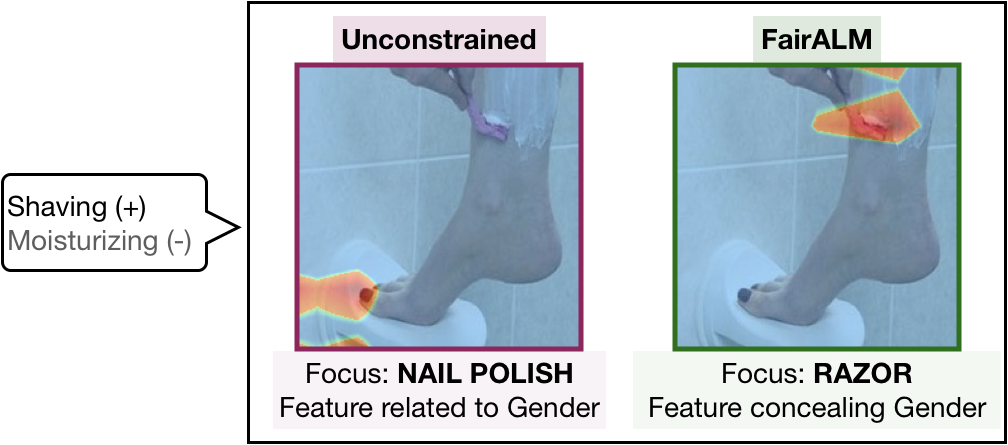} \hskip 8pt
		\includegraphics[width=0.45\linewidth]{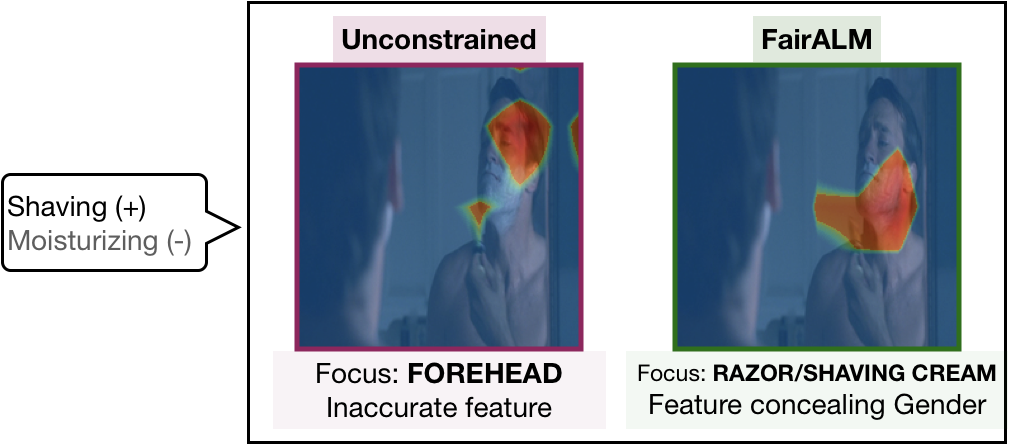}
	\end{minipage}
	\vskip 4pt
	\begin{minipage}{\linewidth}
		\includegraphics[width=0.45\linewidth]{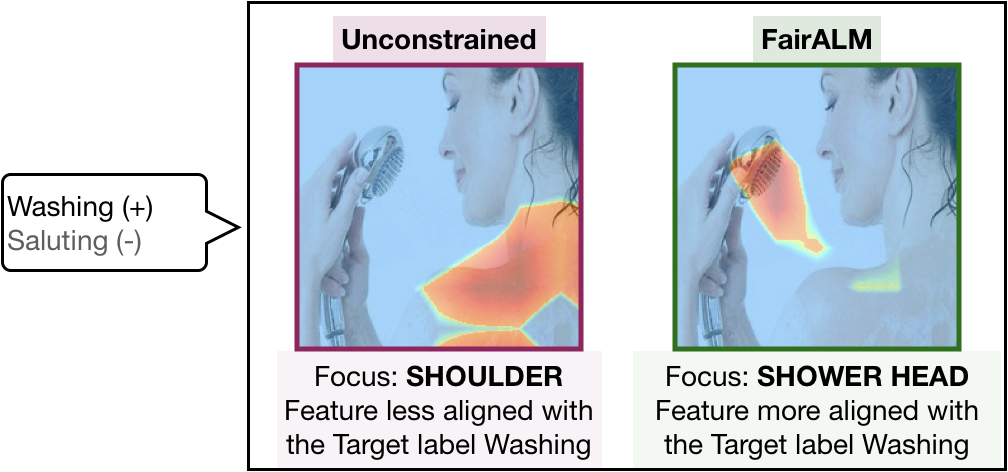} \hskip 8pt
		\includegraphics[width=0.45\linewidth]{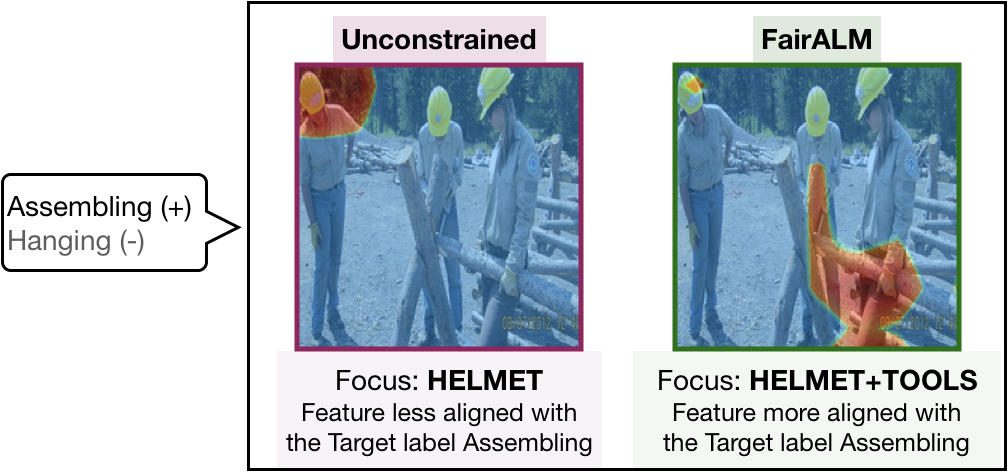}
	\end{minipage}
	\vskip -0.1in
	\caption{\label{fig:imsitu_interpretability} \footnotesize {\textbf{Interpretability in ImSitu.}} The activation maps indicate that FairALM conceals gender revealing attributes in an image. Moreover, the attributes are more aligned with label of interest. The target class predicted is indicated by a $+$. The activation maps in the examples shown in this figure are representative of the general behavior on this dataset. More examples can be found in the Appendix.}
	\vskip -0.1in
\end{figure*}

{\bf Qualitatively assessing Interpretability.} While the DEO measure obtained by FairALM is lower, we can ask an interesting question: when
we impose the fairness constraint, precisely which aspects of the image are no longer ``legal'' for
the neural network to utilize? This issue can be approached via
visualizing activation maps from models such as CAM \cite{DBLP:journals/corr/ZhouKLOT15}.
As a representative example, our analysis suggests that in general,
an unconstrained model uses the entire face image (including the gender-revealing
parts). We find some consistency between the activation maps for {\em attractiveness} and activation maps of an unconstrained model trained
to predict {\em gender}! 
In contrast, when we impose the fairness
constraint, the corresponding activation maps turn out to be clustered around specific
regions of the face which are {\em not} gender revealing.
In particular, a surprising finding was that the left regions
in the face were far more prominent which turns out to be consistent
with studies in psychology  \cite{Blackburn2012}.

{\bf Summary.} FairALM minimized the DEO measure without compromising the test error. It has a more stable training profile than an
$\ell_2$ penalty and is competitive with recent fairness methods in vision. The activation maps in FairALM concentrate on non-gender revealing features of the face when controlled for gender.

\subsection{Imsitu Dataset}
{\bf Data and Setup.} ImSitu \cite{yatskar2016} is a
           situation recognition dataset consisting of
           $\sim100$K color images taken from the web. The annotations for the image is provided as a summary of the activity in the image and includes a verb describing it, the interacting agents and their roles. The protected variable in this experiment is gender. 
Our objective is to classify a pair of verbs associated with an image. The pair is chosen such that if one of the verbs is biased towards males then the other would be biased towards females.
The authors in \cite{zhao2017men} report the list of labels in the ImSitu dataset that are gender
biased: we choose our verb pairs from this list. In particular, we consider the verbs \textit{Cooking vs Driving}, \textit{Shaving vs Moisturizing}, \textit{Washing vs Saluting} and \textit{Assembling vs Hanging}. We compare our results against multiple baselines such as 
	\begin{inparaenum}[\bfseries (1)]
		\item Unconstrained 
		\item \textit{$\ell_2$-penalty}, the penalty applied on the DEO measure
		\item \textit{Re-weighting}, a weighted loss functions where the weights account for the dataset skew
		\item \textit{Adversarial} \cite{zhang2018mitigating}
		\item \textit{Lagrangian} \cite{zhao2017men}
		\item \textit{Proxy-Lagrangian} \cite{cotter2018two}.
	\end{inparaenum}
The supplement includes more details of the baseline methods.
\begin{figure*}[!t]
	\centering
	\begin{subfigure}[b]{0.45\linewidth}
		\includegraphics[width=\linewidth]{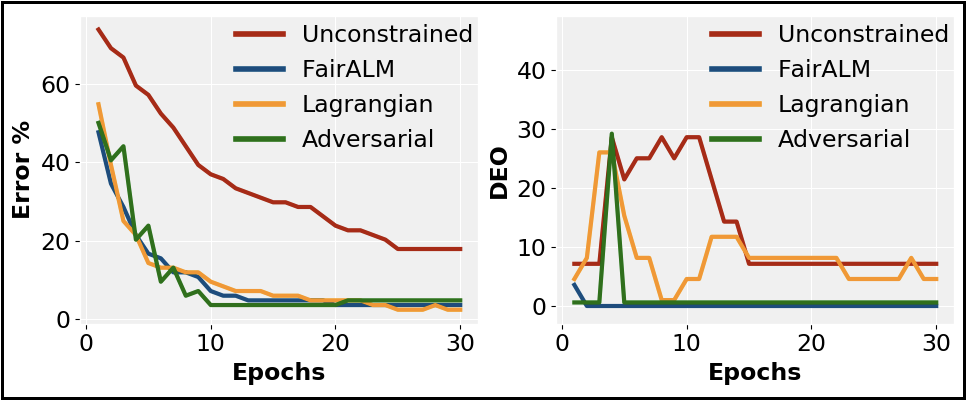}
		\caption{ \footnotesize Cooking {\tiny (+)} Driving {\tiny (-)}}
	\end{subfigure}%
	\hskip8pt
	\centering
	\begin{subfigure}[b]{0.45\linewidth}
		\includegraphics[width=\linewidth]{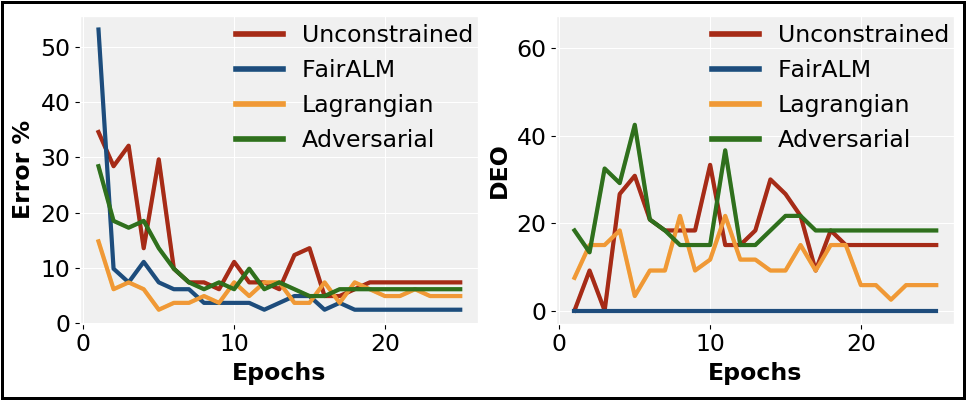}
		\caption{\footnotesize Assembling {\tiny (+)} Hanging {\tiny (-)}}
	\end{subfigure}%
	\vskip -0.1in
	\caption{\label{fig:imsitu_plots}\footnotesize \textbf{Training Profiles.} FairALM achieves minimum DEO early in training and remains competitive on testset errors. More plots in appendix.}
	\vskip -0.2in
\end{figure*} 

 {\bf Quantitative results.} From Fig.~\ref{fig:imsitu_plots}, it can be seen that FairALM reaches a zero DEO measure very early in training and attains better test errors than an unconstrained model. Within the family of Lagrangian methods such as \cite{zhao2017men,cotter2018two}, FairALM performs better on verb pair `Shaving vs Moisturizing' in both test error and DEO measure as indicated in Table~\ref{tab:imsitu-accs} . While the results on the other verb pairs are comparable, FairALM was observed to be more stable to different hyper-parameter choices. This finding is in accord with recent studies by \cite{asi2019stochastic} who prove that proximal function models are robust to step-size selection. Detailed analysis is provided in the supplement. Turning now to an adversarial method such as \cite{zhao2017men}, results in Table~\ref{tab:imsitu-accs} show that the DEO measure is not controlled as competently as FairALM. Moreover, complicated training routines and unreliable convergence \cite{barnett2018convergence} makes model-training harder.
 
{\bf Interpretable Models.} We used CAM \cite{DBLP:journals/corr/ZhouKLOT15} to inspect the image regions used by the
model for target prediction. We observe that the unconstrained model ends up picking features from locations that may not be relevant for the task description 
but merely co-occur with the verbs in this particular dataset (and are gender-biased). 
Fig.~\ref{fig:imsitu_interpretability} highlights this observation for the selected classification tasks. Overall, we observe that the semantic regions used by the constrained model are more aligned with the action verb present in the image, and this adds to the qualitative advantages of the model trained using FairALM in terms of interpretability.

{\bf Limitations}. We also note that there are cases
where both the unconstrained model and FairALM
look at incorrect image regions for prediction, owing to the small dataset sizes.
However, the number of such cases are far fewer for FairALM than the unconstrained setup.

{\bf Summary}. FairALM successfully minimizes the fairness measure while classifying verb/action pairs associated with an image. FairALM uses regions in an image that are more relevant to the target class and less gender revealing.
\begin{table}[!t]
	\centering
	\resizebox{\columnwidth}{!}{%
		\begin{tabular}{c@{\hskip 0.1in}cg@{\hskip 0.25in}cg@{\hskip 0.25in}cg@{\hskip 0.25in}cg}
			\textbf{} & \multicolumn{2}{c}{\begin{tabular}[c]{@{}c@{}}Cooking{\tiny{(+)}} \\ Driving{\tiny{(-)}}\end{tabular}} & \multicolumn{2}{c}{\begin{tabular}[c]{@{}c@{}}Shaving{\tiny{(+)}} \\ Moisturize{\tiny{(-)}}\end{tabular}} & \multicolumn{2}{c}{\begin{tabular}[c]{@{}c@{}}Washing{\tiny{(+)}} \\ Saluting{\tiny{(-)}}\end{tabular}} & \multicolumn{2}{c}{\begin{tabular}[c]{@{}c@{}}Assembling{\tiny{(+)}} \\ Hanging{\tiny{(-)}}\end{tabular}} \\ \hline\hline
			& ERR & DEO & ERR & DEO & ERR & DEO & ERR & DEO \\ \hline\hline
			\begin{tabular}[c]{@{}c@{}}Unconstrained\end{tabular} & 17.9 & 7.1 & 23.6 & 4.2 & 12.8 & 25.9 & 7.5 & 15.0 \\ \hline
			\begin{tabular}[c]{@{}c@{}}$\ell_2$ Penalty\end{tabular} & 14.3 & 14.0 & 23.6 & 1.3 & 10.9 & 0.0 & 5.0 & 21.6 \\ \hline
			Reweight & 11.9 & 3.5 & 19.0 & 5.3 & 10.9 & 0.0 & 4.9 & 9.0 \\ \hline
			Adversarial & 4.8 & 0.0 & 13.5 & 11.9 & 14.6 & 25.9 & 6.2 & 18.3 \\ \hline
			Lagrangian & 2.4 & 3.5 & 12.4 & 12.0 & 3.7 & 0.0 & 5.0 & 5.8\\ \hline
			Proxy-lagragn. & 2.4 & 3.5 & 12.4 & 12.0 & 3.7 & 0.0 & 14.9 & 26.0 \\ \hline
			\textbf{FairALM} & 3.6 & 0.0 & 20.0 & 0.0 & 7.3 & 0.0 & 2.5 & 0.0 \\ \hline\hline
		\end{tabular}%
	}
	\caption{\label{tab:imsitu-accs} \footnotesize \textbf{Quantitative Results on ImSitu.} Test errors (ERR) and DEO measure are reported in $\%$. The target class that is to be predicted in is indicated by a $+$. FairALM always achieves a zero DEO while remaining competitive in ERR with the best method for a given verb-pair.}
	\vskip -0.2in
\end{table}

\subsection{Chest X-Ray datasets}
{\bf Data and Setup.} The datasets we examine here are publicly available from
        the U.S. National Library of Medicine \cite{jaeger2014two}. The images come from two sites/sources.
        Images for the first site are collected from patients in Montgomery county, USA and
        includes $138$ x-rays. The second set of images includes
        $662$ images collected at a hospital in Shenzhen, China.
        Our task is to predict pulmonary tuberculosis (TB) from the x-ray images.
        The images are collected from different x-ray machines with
        different characteristics, and have site-specific markings or artifacts, see
        Fig~\ref{fig:chestxray_concept}.  $25\%$ of the samples from the pooked dataset are set aside for testing.
        \begin{figure}[!b]
        	\centering
        	\includegraphics[width=0.5\columnwidth]{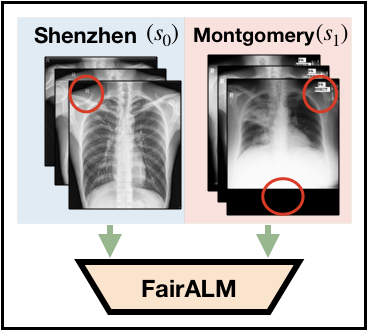}
        	\caption{\label{fig:chestxray_concept} \footnotesize \textbf{FairALM for dataset pooling.} Data is pooled from two sites/hospitals, Shenzhen $s_0$ and Montgomery $s_1$. }
        	\vskip-5pt
        \end{figure}
           \begin{figure}[!b]
        	\centering
        	\frame{
        		\includegraphics[width=0.3\columnwidth]{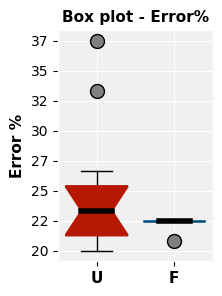}
        		\includegraphics[width=0.3\columnwidth]{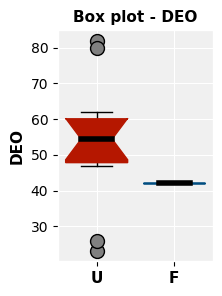}
        		\includegraphics[width=0.3\columnwidth]{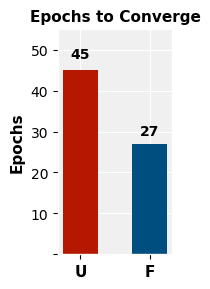}
        	}
        	\caption{\label{fig:chestxray} \footnotesize \textbf{Better Generalization with FairALM.} We compare Unconstrained mode (\textbf{U}) and FairALM (\textbf{F}) Box-plots indicate a lower variance in testset error and the DEO measure for FairALM. Moreover, FairALM reaches $20\%$ testset error in fewer epochs.}
        	\vskip-5pt
        \end{figure}
        
        {\bf Quantitative Results.} We treat the site information, Montgomery or Shenzhen, as a nuisance/protected variable and seek to decorrelate it from the TB labels. We train a ResNet18 network and compare an unconstrained model with FairALM model. Our datasets of choice are small in size, and so deep models easily overfit to site-specific biases present in the training data. Our results corroborate this conjecture, the training accuracies reach $100\%$ very early and the test set accuracies for the unconstrained model has a large variance over multiple experimental runs. Conversely, as depicted in Fig.~\ref{fig:chestxray}, a FairALM model not only maintains a lower variance in the test set errors and DEO measure but also attains
        improved performance on these measures. What stands out in this experiment is that the number of
        epochs to reach a certain test set error is lower for FairALM indicating that the model generalizes faster compared to an unconstrained model.
               
{\bf Summary.} FairALM is effective at learning from datasets from two different sites/sources, minimizes site-specific biases and accelerates generalization.

\section{Conclusion}
  We introduced FairALM, an augmented Lagrangian framework to impose constraints on
  fairness measures studied in the literature. On the theoretical side, we provide
  strictly better bounds -- $\mathcal{O}\bigg(\frac{\log^2 T}{T}\bigg)$
  versus $\mathcal{O}\bigg(\frac{1}{\sqrt{T}}\bigg)$, for reaching a saddle point.
  On the application side, we provide extensive evidence (qualitative and quantitative) on image
  datasets commonly used in vision to show the potential benefits of our proposal.
  Finally, we use FairALM to mitigate site specific differences when performing analysis of
  pooled medical image datasets. In applying deep learning to scientific/biomedical problems,
  this is an important issue since sample sizes at individual sites/institutions are often
  smaller.
  The overall procedure is simple which we believe will lead to broader adoption
  and follow-up work on this socially relevant topic.

{\small
\bibliographystyle{ieee_fullname}
\bibliography{egbib}
}

\onecolumn

\section{Appendix}

\subsection{Experiments on \textit{FairALM: Linear Classifier}}
{\bf Data.} We consider four standard datasets, {\tt Adult},
{\tt COMPAS}, {\tt German} and {\tt Law Schools} \cite{donini2018empirical,agarwal2018reductions}. The {\tt Adult} dataset
is comprised of demographic characteristics where the task
is to predict if a person has an income higher
(or lower) than $\$50$K per year. The protected attribute here is gender.
In {\tt COMPAS} dataset, the task is to predict the recidivism of individuals based on features such as age, gender, race, prior offenses and charge degree. The protected attribute
here is race, specifically, whether the individual is white
or black. The {\tt German} dataset classifies people as good or bad credit risks with the person being a foreigner or not as the protected attribute. The features available in this dataset are credit history, saving
accounts, bonds, etc. Finally, the {\tt Law Schools} dataset, which comprises of $\sim20$K examples, seeks to predict a person's passage of the bar exam. Here, a binary attribute race is considered as the protected attribute.\\

{\bf Setup.} We use Alg.~$1$ in the paper for experiments in this section. Recall from $\S~3$ of the paper that Alg.~$1$ requires the specification of $\mathcal{H}$. We use the space of logistic regression classifiers as $\mathcal{H}$. At the start of the algorithm we have an empty set of classifiers. In each iteration, we add a newly trained classifier $h \in \mathcal{H}$ to the set of classifiers only if $h$ has a smaller Lagrangian objective value among all the classifiers already in the set. \\

{\bf Quantitative Results.} For the {\tt Adult} dataset, FairALM attains a smaller test error and smaller DEO compared to the baselines considered in Table~\ref{tab:linear_classifier}. We see big improvements on the DEO measure in {\tt COMPAS} dataset and test error in {\tt German} dataset using FairALM. While the performance of FairALM on {\tt Law Schools} is comparable to other methods, it obtains a better false-positive rate than \cite{agarwal2018reductions} which is a better metric as this dataset is skewed towards it's target class.\\

{\bf Summary.} We train Alg.~$1$ on standard datasets specified in \cite{donini2018empirical,agarwal2018reductions}. We observe that FairALM is competitive with the popular methods in the fairness literature.

\begin{table}[!h]
	\centering
	\resizebox{0.9\linewidth}{!}{%
		\begin{tabular}{c@{\hskip 0.2in}cg@{\hskip 0.25in}cg@{\hskip 0.25in}cg@{\hskip 0.25in}cg} 
			& \multicolumn{2}{c}{Adult} & \multicolumn{2}{c}{COMPAS} & \multicolumn{2}{c}{German} & \multicolumn{2}{c}{Law Schools} \\ \hline\hline
			& ERR & DEO & ERR & DEO & ERR & DEO & ERR & DEO \\ \hline\hline
			Zafar \textit{et al.} \cite{zafar2017fairness} & $22.0$ & $5.0$ & $31.0$ & $10.0$ & $38.0$ & $13.0$ & $-$ & $-$ \\ \hline
			Hardt \textit{et al.} \cite{hardt2016equality} & $18.0$ & $11.0$ & $29.0$ & $8.0$ & $29.0$ & $11.0$ & $4.5$ & $0.0$ \\ \hline
			Donini \textit{et al.} \cite{donini2018empirical} & $19.0$ & $1.0$ & $27.0$ & $5.0$ & $27.0$ & $5.0$ & $-$ & $-$ \\ \hline
			Agarwal \textit{et al.} \cite{agarwal2018reductions} & $17.0$ & $1.0$ & $31.0$ & $3.0$ & $-$ & $-$ & $4.5$ & $1.0$ \\ \hline
			\textbf{FairALM} & $15.8 \pm 1$ & $0.7 \pm 0.6$& $34.7 \pm 1$& $0.1 \pm 0.1$ & $24.3 \pm 2.7$ & $10.8 \pm 4.5$ &$4.8 \pm 0.1$ & $0.4 \pm 0.2$ \\ \hline\hline
		\end{tabular}%
	}
	\caption{\label{tab:linear_classifier} \footnotesize \textbf{Standard Datasets.} We report test error (ERR) and DEO fairness measure in $\%$.  FairALM attains minimal DEO measure among the baseline methods while maintaining a similar test error.}
\end{table}

\subsection{Proofs for theoretical claims in the paper}
Prior to proving the convergence of primal and dual variables of our algorithm with respect to the augmented lagrangian $L_T(q, \lambda)$, we prove a regret bound on the function $f_t(\lambda)$ which is defined in the following lemma. As $f_t(\lambda)$ is a strongly concave function (which we shall see shortly), we obtain a bound on the negative regret.
\begin{lemma}
	\label{lemma:regretbound}
	Let $r_t$ denote the reward at each round of the game. The reward function $f_t(\lambda)$ is defined as $f_t(\lambda) = \lambda r_t - \frac{1}{2\eta} (\lambda - \lambda_t)^2$. We choose $\lambda$ in the round $T+1$ to maximize the cumulative reward, i.e., $\lambda_{T+1} = \argmax_{\lambda} \sum_{t=1}^T f_t(\lambda)$.  Define $L = \max_t \mid r_t\mid$.   We obtain the following bound on the cumulative reward, for any $\lambda$,
	\begin{align}
	\sum_{t=1}^T \bigg( \lambda r_t - \frac{1}{2\eta} (\lambda - \lambda_t)^2 \bigg) \le \sum_{t=1}^T \lambda_t r_t + \eta L^2 \mathcal{O}(\log T)
	\end{align}	
\end{lemma}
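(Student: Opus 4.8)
The plan is to recognize the rule $\lambda_{T+1}=\argmax_{\lambda}\sum_{t=1}^T f_t(\lambda)$ as Follow‑the‑Leader (FTL) run on the strongly concave rewards $f_t$, and to bound its (negative) regret by the classical ``Be‑the‑Leader'' argument together with a one‑step stability estimate. First note that each $f_s$ is a fixed quadratic once the history $(\lambda_1,r_1),\dots,(\lambda_s,r_s)$ is determined, and that $f_t(\lambda_t)=\lambda_t r_t$ because the proximal term vanishes at $\lambda=\lambda_t$. Hence the right‑hand side $\sum_{t=1}^T\lambda_t r_t$ is exactly $\sum_{t=1}^T f_t(\lambda_t)$, the reward actually collected by the algorithm, and the claimed inequality is a logarithmic bound on $\sum_t f_t(\lambda)-\sum_t f_t(\lambda_t)$.

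Step 1 (Be‑the‑Leader). I would prove by induction on $T$ that the leader sequence satisfies $\sum_{t=1}^T f_t(\lambda_{t+1})\ge\sum_{t=1}^T f_t(\lambda)$ for every $\lambda$: the base case is the definition of $\lambda_2$; the inductive step applies the hypothesis at the point $\lambda=\lambda_{T+1}$, adds $f_T(\lambda_{T+1})$ to both sides, and then uses that $\lambda_{T+1}$ maximizes $\sum_{t=1}^T f_t$. This immediately gives $\sum_{t=1}^T f_t(\lambda)-\sum_{t=1}^T f_t(\lambda_t)\le\sum_{t=1}^T\bigl(f_t(\lambda_{t+1})-f_t(\lambda_t)\bigr)$.

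Step 2 (one‑step stability). It remains to control each gap $f_t(\lambda_{t+1})-f_t(\lambda_t)=(\lambda_{t+1}-\lambda_t)r_t-\tfrac{1}{2\eta}(\lambda_{t+1}-\lambda_t)^2$. Setting the derivative of $\sum_{s=1}^t f_s$ to zero gives a closed form for the leader, and taking the difference of consecutive optimality conditions telescopes to the explicit recursion $\lambda_{t+1}-\lambda_t=\eta r_t/t$ — this is precisely the dual update of Algorithm~\ref{alg:fair_alm_linear} with $r_t=\mu_{h_t}^{s_0}-\mu_{h_t}^{s_1}$. Substituting, $f_t(\lambda_{t+1})-f_t(\lambda_t)=\tfrac{\eta r_t^2}{t}-\tfrac{\eta r_t^2}{2t^2}\le\tfrac{\eta r_t^2}{t}\le\tfrac{\eta L^2}{t}$, using $|r_t|\le L$. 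Summing over $t=1,\dots,T$ and invoking $\sum_{t=1}^T 1/t=\mathcal{O}(\log T)$ gives $\sum_{t=1}^T f_t(\lambda)-\sum_{t=1}^T\lambda_t r_t\le\eta L^2\mathcal{O}(\log T)$, as desired.

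The only genuinely delicate point — and the step I would check most carefully — is the bookkeeping that converts the implicit leader $\argmax\sum_{s\le t}f_s$ into the explicit recursion $\lambda_{t+1}=\lambda_t+\eta r_t/t$; once that identity is established the rest is a short telescoping computation. It is worth emphasizing that the strong concavity (curvature $1/\eta$) of each $f_t$ is exactly what makes the per‑round gap $\mathcal{O}(1/t)$ and hence the total $\mathcal{O}(\log T)$ rather than $\mathcal{O}(\sqrt{T})$, and that no projection is needed since $\lambda$ ranges over all of $\mathbb{R}$, so the first‑order condition defining the leader holds with equality.
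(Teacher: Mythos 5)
Your proposal is correct and follows essentially the same route as the paper: the Be-the-Leader inequality for FTL, the closed-form first-order condition yielding $\lambda_{t+1}-\lambda_t=\eta r_t/t$, the per-round stability bound $f_t(\lambda_{t+1})-f_t(\lambda_t)\le \eta r_t^2/t$, and the harmonic sum giving the $\eta L^2\,\mathcal{O}(\log T)$ term. The only difference is cosmetic: you prove the Be-the-Leader step by induction where the paper cites Lemma~2.1 of the online-learning survey, and you carry the (discarded) $-\eta r_t^2/(2t^2)$ term explicitly.
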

\begin{proof}
	As we are maximizing the cumulative reward function, in the $(t+1)^{th}$ iteration $\lambda_{t+1}$ is updated as $\lambda_{t+1} = \argmax_{\lambda} \sum_{i=1}^t f_i(\lambda)$. This learning rule is also called the Follow-The-Leader  (FTL) principle which is discussed in Section $2.2$ of \cite{shalev2012online}. Emulating the proof of Lemma $2.1$ in \cite{shalev2012online}, a bound on the negative regret of FTL, for any $\lambda \in \R$, can be derived due to the concavity of $f_t(\lambda)$,
	\begin{align}
		\label{suppeq_lemma21}
		\sum_{t=1}^T f_t(\lambda) - \sum_{t=1}^{T} f_t(\lambda_t) \le \sum_{t=1}^{T} f_t(\lambda_{t+1}) - \sum_{t=1}^{T} f_t(\lambda_t)
	\end{align}
	Our objective, now, is to obtain a bound on RHS of \eqref{suppeq_lemma21}. Solving {\tt $\argmax_{\lambda} \sum_{i=1}^t f_i(\lambda)$} for $\lambda$ will show us how $\lambda_t$ and $\lambda_{t+1}$ are related,
	\begin{align}
		\lambda_{t+1 } = \frac{\eta}{t} \sum_{i=1}^{t}r_i + \frac{1}{t} \sum_{i=1}^{t}\lambda_i 
		\label{step:lambda_relation}
		\hskip 8pt \implies \lambda_{t+1} - \lambda_{t} = \frac{\eta}{t}r_t
	\end{align}
	Using \eqref{step:lambda_relation}, we obtain a bound on $f_t(\lambda_{t+1}) - f_t(\lambda_t)$, we have,
	\begin{align*}
		f_t(\lambda_{t+1}) - f_t(\lambda_t) &\le \frac{\eta}{t}r_t^2
	\end{align*}
	With $L = \max_t |r_t|$ and using the fact that $\sum_{i=1}^{T} \frac{1}{i} \le (\log T + 1)$,
	\begin{align}
		\label{suppeq:cum1}
		\sum_{t=1}^{T} \Big(f_t(\lambda_{t+1}) - f_t(\lambda_{t}) \Big) \le \eta L^2 (\log T + 1)
	\end{align}
	Let us denote $\xi_T = \eta L^2 (\log T + 1)$, we bound \eqref{suppeq_lemma21} with \eqref{suppeq:cum1},
	\begin{empheq}[box={\Garybox[Cumulative Reward Bound]}]{align}
		\label{suppeq:paper_lemma}
		\forall \lambda \in \R \quad \sum_{t=1}^{T} \Big( \lambda r_t - \frac{1}{2\eta}(\lambda-\lambda_t)^2 \Big) \le \Big(\sum_{t=1}^{T} \lambda_t r_t\Big) + \xi_T
	\end{empheq}
\end{proof}
Next, using the \textit{Cumulative Reward Bound}~\eqref{suppeq:paper_lemma}, we prove the theorem stated in the paper.  The theorem gives us the number of iterations required by Alg.~$1$ (in the paper) to reach a $\nu-$approximate saddle point. Our bounds for $\eta=\frac{1}{T}$ and $\lambda \in \R$ are strictly better than $\cite{agarwal2018reductions}$. We re-state the theorem here,
\begin{theorem}
	Recall that $d_h$ represents the difference of conditional means. Assume that $|| d_h||_{\infty} \le L$ and consider $T$ rounds of Alg~$1$ (in the paper). Let $\bar q := \frac{1}{T}\sum_{t=1}^T q_t$ and $\bar \lambda := \frac{1}{T}\sum_{t=1}^T \lambda_t$ be the average plays of the $q$-player and the $\lambda$-player respectively. Then, we have $L_T(\bar q, \bar \lambda) \le L_T(q, \bar \lambda) + \nu \text{ and }	L_T(\bar q, \bar \lambda) \ge L_T(\bar q, \lambda) - \nu$, under the following conditions,
	\begin{compactitem}
		\item If $\eta = \mathcal{O}(\sqrt{\frac{B^2T}{L^2 (\log T+ 1)}})$, $\nu = \mathcal{O}(\sqrt{\frac{B^2 L^2 (\log T + 1)}{T}})$; $\forall |\lambda| \le B$,  $\forall q \in \Delta$
		\item If $\eta = \frac{1}{T}$, $\nu = \mathcal{O}(\frac{L^2(\log T + 1)^2}{T})$; $\forall \lambda \in \R$, $\forall q \in \Delta$
	\end{compactitem}
\end{theorem}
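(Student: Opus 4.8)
The plan is to follow the standard template for showing that average play in a repeated zero-sum game converges to a saddle point, using the regret guarantees of the two players separately and then combining them. The $\lambda$-player plays Follow-The-Leader on the strongly concave rewards $f_t(\lambda)=\lambda r_t-\frac{1}{2\eta}(\lambda-\lambda_t)^2$ with $r_t = d_{h_t} = \mu_{h_t}^{s_0}-\mu_{h_t}^{s_1}$, so Lemma~\ref{lemma:regretbound} (the Cumulative Reward Bound~\eqref{suppeq:paper_lemma}) applies directly: for every $\lambda$,
\[
\sum_{t=1}^T\Big(\lambda r_t-\tfrac{1}{2\eta}(\lambda-\lambda_t)^2\Big)\le\sum_{t=1}^T\lambda_t r_t+\xi_T,\qquad \xi_T=\eta L^2(\log T+1).
\]
The $q$-player plays a best response in each round: $h_t\in\argmin_i(e_{h_i}+\lambda_t d_{h_i})$, which means $\langle q_t, e_h\rangle+\lambda_t\langle q_t,d_h\rangle\le\langle q,e_h\rangle+\lambda_t\langle q,d_h\rangle$ for every $q\in\Delta$, i.e. the $q$-player has zero regret (nonpositive regret) with respect to any fixed $q$. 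These are the two ingredients.

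First I would rewrite $L_T(\bar q,\bar\lambda)$ by linearity. Since $L_T(q,\lambda)=\langle q,e_h\rangle+\lambda\langle q,d_h\rangle-\frac{1}{2\eta}(\lambda-\lambda_T)^2$ is bilinear in $(q,\lambda)$ apart from the proximal term, averaging gives $\frac{1}{T}\sum_t L_T(q_t,\lambda_t)=\langle\bar q,e_h\rangle+\frac1T\sum_t\lambda_t\langle q_t,d_h\rangle-\frac1T\sum_t\frac{1}{2\eta}(\lambda_t-\lambda_T)^2$, and I would relate this to $L_T(\bar q,\bar\lambda)$ and $L_T(\bar q,\lambda)$, $L_T(q,\bar\lambda)$. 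For the first inequality ($L_T(\bar q,\bar\lambda)\le L_T(q,\bar\lambda)+\nu$): use the $q$-player's best-response property summed over $t$ to get $\sum_t[\langle q_t,e_h\rangle+\lambda_t\langle q_t,d_h\rangle]\le\sum_t[\langle q,e_h\rangle+\lambda_t\langle q,d_h\rangle]$, divide by $T$, and then control the gap between $\frac1T\sum_t\lambda_t\langle q,d_h\rangle$ and $\bar\lambda\langle q,d_h\rangle$ together with the Jensen/convexity slack in the quadratic term, both of which are $O(\xi_T/T)$ or handled by the bound $|\lambda|\le B$; the concavity of $L_T(\bar q,\cdot)$ in $\lambda$ and convexity of $(\lambda-\lambda_T)^2$ give $\frac1T\sum_t L_T(\bar q,\lambda_t)\le L_T(\bar q,\bar\lambda)$ after sign bookkeeping (this direction needs care because the proximal term enters with a minus sign). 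For the second inequality ($L_T(\bar q,\bar\lambda)\ge L_T(\bar q,\lambda)-\nu$): invoke the Cumulative Reward Bound with $r_t=\langle q_t,d_h\rangle$ — note $|r_t|\le\|d_h\|_\infty\le L$ since $q_t$ is a vertex of $\Delta$ — to conclude that no fixed $\lambda$ beats the running play by more than $\xi_T$, and divide by $T$.

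Combining, both one-sided gaps are at most $\nu=O(\xi_T/T)+(\text{slack from }\lambda\text{ vs }\bar\lambda)$. With $\eta=\frac1T$ we get $\xi_T/T=O(L^2(\log T+1)/T)$; the extra $\log T$ factor in the stated $\nu=O(L^2(\log T+1)^2/T)$ comes from bounding the slack term $\big|\frac1T\sum_t\lambda_t\langle q,d_h\rangle-\bar\lambda\langle q,d_h\rangle\big|$, where one uses that $|\lambda_t|$ itself grows like $O(\eta L\log t)=O(L\log t/T)$ from the update $\lambda_{t+1}-\lambda_t=\frac\eta t r_t$, producing a second logarithm; for the $|\lambda|\le B$ case one instead optimizes $\xi_T/T+B\cdot(\text{something})$ over $\eta$, giving the $\sqrt{B^2L^2(\log T+1)/T}$ rate after setting $\eta=\Theta(\sqrt{B^2T/(L^2(\log T+1))})$. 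The main obstacle I anticipate is the careful sign/convexity accounting around the proximal term $-\frac{1}{2\eta}(\lambda-\lambda_T)^2$: it is subtracted, so it helps in one direction and hurts in the other, and one must be precise about whether Jensen is applied to $(\lambda-\lambda_T)^2$ (convex) in the right direction, plus correctly bounding the centering-drift term $\frac1T\sum_t(\lambda_t-\bar\lambda)\langle q,d_h\rangle$ that has no analogue in the ordinary (non-proximal) analysis and is exactly where the improved $\log^2T/T$ versus $1/\sqrt T$ separation from \cite{agarwal2018reductions} is won or lost.
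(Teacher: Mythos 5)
Your proposal is correct and follows essentially the same route as the paper's proof: the FTL cumulative-reward bound for the $\lambda$-player, the best-response (zero-regret) property of the $q$-player, Jensen's inequality on the proximal quadratic, the telescoping bound $\lambda_{t+1}-\lambda_t=\frac{\eta}{t}r_t$ that produces the extra $\log T$ factor in the $\lambda\in\R$ case, and tuning $\eta$ for the $|\lambda|\le B$ case. The paper merely packages the ``centering-drift'' slack you identify into an explicit error function $\zeta_T(\lambda)$ and evaluates it at the maximizer $\lambda_*=\lambda_T+\eta\sum_i\bar q_i d_{h_i}$, which is the same bookkeeping you describe.
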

\begin{proof}
	Recall the definition of $L_T(q, \lambda)$ from the paper, 
	\begin{align}
		L_T(q, \lambda) = \big(\sum_i q_i e_{h_i}\big) + \lambda \big(\sum_i q_i d_{h_i} \big) - \frac{1}{2\eta}\big(\lambda-\lambda_T\big)^2
	\end{align}
	For the sake of this proof, let us define $\zeta_T$ in the following way,
	\begin{align}
		\label{suppeq:zeta_def}
		\zeta_T(\lambda) = \frac{1}{2\eta}\sum_{t=1}^{T}\Big( (\lambda - \lambda_t)^2 - (\lambda - \lambda_T)^2 + (\lambda_t - \lambda_T)^2 \Big)
	\end{align}
	Recollect from \eqref{suppeq:paper_lemma} that $\xi_T = \eta L^2 (\log T + 1)$. We \textbf{outline} the proof as follows,
	\begin{enumerate}
		\item \label{suppitem:ub} First, we compute an upper bound on $L_T(\bar q, \bar \lambda)$,
		\begin{empheq}[box={\Garybox[Average Play Upper Bound]}]{align}
			\label{suppeq:avgplay_ub}
			L_T(\bar q, \bar \lambda) &\le L_T(q, \bar \lambda ) + \frac{\zeta_T(\bar \lambda)}{T} + \frac{\xi_T}{T} \quad \forall q \in \Delta\\
			\text{Also, } L_T(\bar q, \lambda) &\le L_T(q, \bar \lambda ) + \frac{\zeta_T(\lambda)}{T} + \frac{\xi_T}{T} \quad \forall \lambda \in \R ,\forall q \in \Delta
		\end{empheq}
		\item \label{suppitem:lb} Next, we determine an lower bound on $L_T(\bar q, \bar \lambda)$,
		\begin{empheq}[box={\Garybox[Average Play Lower Bound]}]{align}
		\label{suppeq:avgplay_lb}
		L_T(\bar q, \bar \lambda) \ge L_T(\bar q, \lambda ) - \frac{\zeta_T(\lambda)}{T} - \frac{\xi_T}{T} \quad \forall \lambda \in \R
		\end{empheq}
		\item \label{suppitem:lamb} We bound $\frac{\zeta_T(\lambda)}{T} + \frac{\xi_T}{T}$ for the case $|\lambda| \le B$ and show that a $\nu-$approximate saddle point is attained.
		\item \label{suppitem:lamr} We bound $\frac{\zeta_T(\lambda)}{T} + \frac{\xi_T}{T}$ for the case $\lambda \in \R$ and, again, show that $\nu-$approximate saddle point is attained.
	\end{enumerate}
	We write the proofs of the above fours parts one-by-one. Steps~\ref{suppitem:ub},\ref{suppitem:lb} in the above outline are intermediary results used to prove our main results in Steps~\ref{suppitem:lamb},\ref{suppitem:lamr}. Reader can directly move to Steps~\ref{suppitem:lamb},\ref{suppitem:lamr} to see the main proof.\\\\
	{\bf 1. Proof for the result on \textit{Average play Upper Bound}}
	\begin{align}
		L_T(q, \bar \lambda) &= {\textstyle \sum}_i q_i e_{h_i} + \Big(\frac{{\textstyle \sum}_t \lambda_t}{T}\Big)\Big({\textstyle \sum}_i q_i d_{h_i} \Big) - \frac{1}{2\eta}\Big(\frac{{\textstyle \sum}_t \lambda_t}{T} - \lambda_T\Big)^2\\
		\intertext{Exploiting convexity of $\frac{1}{2\eta}\Big(\frac{{\textstyle \sum}_t \lambda_t}{T} - \lambda_T\Big)^2$ via Jenson's Inequality,}
		&\ge \frac{1}{T} \sum_t \Big({\textstyle \sum}_i q_i e_{h_i} + \lambda_t {\textstyle \sum}_i q_i d_{h_i} - \frac{1}{2\eta}(\lambda_t - \lambda_T)^2 \Big)\\
		\intertext{As $h_t = \argmin_{q} L_T(q, \lambda_t)$, we have $L_T(q, \lambda_t) \ge L_T(h_t, \lambda_t)$, hence,}
		&\ge \frac{1}{T} \sum_t \Big( e_{h_t}  + \lambda_t d_{h_t} - \frac{1}{2\eta} (\lambda_t - \lambda_T)^2 \Big)\\
		\intertext{Using the \textit{Cumulative Reward Bound} \eqref{suppeq:paper_lemma},}
		&\ge \frac{{\textstyle \sum}_t e_{h_t}}{T} + \frac{\lambda{\textstyle \sum}_t d_{h_t}}{T} - \frac{1}{T} \sum_t \Big( \frac{(\lambda- \lambda_t)^2}{2\eta} + \frac{(\lambda_t- \lambda_T)^2}{2\eta} \Big)- \frac{\xi_T}{T}\\
		\intertext{Add and subtract $\frac{1}{T}\sum_{t=1}^T\frac{1}{2\eta}(\lambda - \lambda_T)^2$, use $\zeta_T$ from \eqref{suppeq:zeta_def} and regroup the terms,}
		&= ({\textstyle \sum}_i \bar q_i e_{h_i}) + (\lambda{\textstyle \sum}_i \bar q_i d_{h_i}) - \frac{1}{2\eta}(\lambda-\lambda_T)^2 - \frac{\zeta_T(\lambda)}{T} - \frac{\xi_T}{T}\\
		\label{suppeq:final_backward}
		&= L_T(\bar q, \lambda) - \frac{\zeta_T(\lambda)}{T} - \frac{\xi_T}{T}
	\end{align}\\	
		{\bf 2. Proof for the result on \textit{Average play Lower Bound}}
		Proof is similar to Step~\ref{suppitem:ub} so we skip the details. The proof involves finding a lower bound for $L_T(\bar q, \lambda)$ using the \textit{Cumulative Reward Bound} \eqref{suppeq:paper_lemma}. With simple algebraic manipulations and exploiting the convexity of $L_T(\bar q, \lambda)$ via the Jenson's inequality, we obtain the  bound that we state.\\
	{\bf 3. Proof for the case $\bm{ |\lambda| \le B}$}\\
	For the case $|\lambda| \le B$, we have $\zeta_T(\lambda) \le \frac{B^2 T}{\eta}$, which gives,
	\begin{align}
		\label{eqsup:caselamb}
		\frac{\zeta_T(\lambda)}{T} + \frac{\xi_T}{T} &\le \frac{B^2}{\eta} + \frac{\eta L^2 (\log T + 1)}{T} 
	\end{align}
	Minimizing R.H.S in \eqref{eqsup:caselamb} over $\eta$ gives us a $\nu-$ approximate saddle point, 
	\begin{empheq}[box={\Garybox[$\nu-$ approximate saddle point for $|\lambda| \le B$ ]}]{align}
		L_T(\bar q, \bar \lambda) \le L_T(q, \bar \lambda) + \nu \quad &\text{ and }\quad L_T(\bar q, \bar \lambda) \ge L_T(\bar q, \lambda) - \nu\\
		\text{ where } \nu = 2\sqrt{\frac{B^2 L^2 (\log T + 1)}{T}} \quad &\text{ and } \eta = \sqrt{\frac{B^2T}{L^2 (\log T+ 1)}}
	\end{empheq}\\\\
	{\bf 4. Proof for the case $\bm{ \lambda \in \R}$}\\
	We begin the proof by bounding $\frac{\zeta_T(\lambda)}{T} + \frac{\xi_T}{T}$. Let $\lambda_* = \argmax_\lambda L_T(\bar q, \lambda)$. We have a closed form for $\lambda_*$ given by $\lambda_* = \lambda_T + \eta {\textstyle \sum}_i \bar q_i d_{h_i}$. Substituting $\lambda_*$ in $\zeta_T$ gives,
	\begin{align}
		\frac{\zeta_T(\lambda_*)}{T} + \frac{\xi_T}{T}
		\label{eqsup:lasmin}
		&= \frac{1}{2\eta}\frac{1}{T}\sum_t \Big(2(\lambda_t-\lambda_T)^2 + 2\eta (\lambda_T-\lambda_t)({\textstyle \sum}_i \bar q_i d_{h_i}) \Big) + \frac{\xi_T}{T}\\
		\intertext{Recollect that $\lambda_{t+1}-\lambda_t = \frac{\eta}{t} d_{h_t}$ (from~\eqref{step:lambda_relation}). Using telescopic sum on $\lambda_t$, we get $(\lambda_T-\lambda_t) \le \eta L (\log T + 1)$ and $(\lambda_T-\lambda_t)^2 \le \eta^2 L^2 (\log T + 1)^2$. We substitute these in the previous equation \eqref{eqsup:lasmin},}
		\frac{\zeta_T(\lambda_*)}{T} + \frac{\xi_T}{T} &\le \eta L^2 (\log T + 1)^2 + \eta L^2 (\log T + 1) + \frac{\eta L^2 (\log T + 1)}{T}\\
		\intertext{Setting $\eta=\frac{1}{T}$, we get}
		\label{suppeq:zeta_bound}
		\frac{\zeta_T(\lambda_*)}{T} + \frac{\xi_T}{T} &\le \mathcal{O}(\frac{L^2(\log T + 1)^2}{T}) := \nu
	\end{align}
	Using ~\eqref{suppeq:zeta_bound}, we prove the convergence of $\lambda$ in the following way,
	\begin{align}
		L_T(\bar q, \lambda) &\le L_T(\bar q, \lambda_*) \quad \text{\Big(as $\lambda_*$ is the maximizer of $L_T(\bar q, \lambda)$\Big)} \\
		&\le L_T(\bar q, \bar \lambda) + \frac{\zeta_T(\lambda_*)}{T} + \frac{\xi_T}{T} \quad \text{\Big(\textit{Average Play Lower Bound} \Big)}\\
		&\le L_T(\bar q, \bar \lambda) + \nu \quad \text{\Big(from ~\eqref{suppeq:zeta_bound}\Big)}
	\end{align}
	We prove the convergence of $q$ in the following way. For any $\lambda \in \R$,
	\begin{align}
		L_T(q, \bar \lambda) &\ge L_T(\bar q, \lambda_*) - \frac{\zeta_T(\lambda_*)}{T} - \frac{\xi_T}{T} \hskip 7.9pt \text{\Big(\textit{Average Play Upper Bound} \eqref{suppeq:final_backward}\Big)}\\
		&\ge L_T(\bar q, \lambda_*) - \nu \quad \text{\Big(from \eqref{suppeq:zeta_bound}\Big)}\\
		&\ge L_T(\bar q, \bar \lambda) - \nu \quad \text{\Big(as $\lambda_*$ is the maximizer of $L_T(\bar q, \lambda)$\Big)}
	\end{align}
	Therefore,
	\begin{empheq}[box={\Garybox[$\nu-$ approximate saddle point for $\lambda \in \R$]}]{align}
		L_T(\bar q, \bar \lambda) \le L_T(q, \lambda) + \nu \quad &\text{ and }\quad L_T(\bar q, \bar \lambda) \ge L_T(\bar q, \lambda) - \nu\\
		\text{ where } \nu = \mathcal{O}\bigg(\frac{L^2(\log T + 1)^2}{T}\bigg) \quad &\text{ and } \quad \eta = \frac{1}{T}
	\end{empheq}
\end{proof}

\subsection{More details on \textit{FairALM: DeepNet Classifier}}
Recall that in $\S~5.2$ in the paper, we identified a key difficulty when extending our algorithm to
deep networks. The main issue is that the set of classifiers $|\mathcal{H}|$ is
not a finite set. We argued that leveraging stochastic gradient descent (SGD)
on an over-parameterized network eliminates this issue. When using SGD, few additional modifications of Alg~$1$ (in the paper) are helpful, such as replacing the non-differentiable indicator function $\mathbbm{1}[\cdot]$ with a smooth surrogate function and computing the empirical estimates of the errors and conditional means denoted by $\hat e_{h}(z)/\hat \mu_{h}^{s}(z)$ respectively. These changes modify our objective to a form that is not a zero-sum game,
\begin{align}
	\label{eqsup:nzobj}
	\max_{\lambda} \min_w \Big( \hat e_{h_w} + \lambda (\hat \mu_{h_w}^{s_0} - \hat \mu_{h_w}^{s_1}) -\frac{1}{2\eta} (\lambda - \lambda_t)^2 \Big)
\end{align}
We use DP constraint in \eqref{eqsup:nzobj}, other fairness metrics discussed in the paper are valid as well. A closed-form solution for $\lambda$  can be achieved by solving an upper bound to \eqref{eqsup:nzobj} obtained by exchanging the ``max"/``min" operations.
\begin{align}
\max_{\lambda} \min_w \Big( \hat e_{h_w} &+ \lambda (\hat \mu_{h_w}^{s_0} - \hat \mu_{h_w}^{s_1}) -\frac{1}{2\eta} (\lambda - \lambda_t)^2 \Big) \\
\label{eqsup:lagub}
&\le \min_w \max_{\lambda} \Big( \hat e_{h_w} + \lambda (\hat \mu_{h_w}^{s_0} - \hat \mu_{h_w}^{s_1}) -\frac{1}{2\eta} (\lambda - \lambda_t)^2 \Big)
\intertext{Substituting the closed form solution $\lambda = \lambda_t + \eta(\hat \mu_{h_w}^{s_0} - \hat \mu_{h_w}^{s_1})$ in \eqref{eqsup:lagub},}
\label{eqsup:convupperhalf}
&\le \min_w \Big( \hat e_{h_w} + + \lambda_t (\hat \mu_{h_w}^{s_0} - \hat \mu_{h_w}^{s_1}) + \frac{\eta}{2}  (\hat \mu_{h_w}^{s_0} - \hat \mu_{h_w}^{s_1})^2  \Big)
\intertext{Note that the surrogate function defined within $\hat \mu_{h_w}^{s}$ is convex and non-negative, hence, we can exploit Jenson's inequality to eliminate the power~$2$ in \eqref{eqsup:convupperhalf} to give us a convenient upper bound,}
\label{eqsup:convupper}
&\le \min_w \Big( \hat e_{h_w} + (\lambda_t + \eta) \hat \mu_{h_w}^{s_0} - (\lambda_t - \eta) \hat \mu_{h_w}^{s_1}\Big)
\end{align}
In order to obtain a good minima in \eqref{eqsup:convupper}, it may be essential to run the SGD on \eqref{eqsup:convupper} a few times: for ImSitu experiments, SGD was run on \eqref{eqsup:convupper} for $5$ times. We also gradually increase the parameter $\eta$ with time as $\eta_{t} = \eta_{t-1} (1 + \eta_\beta)$ for a small non-negative value for $\eta_\beta$, e.g., $\eta_\beta\approx 0.01$. This is a common practice in augmented Lagrangian methods, see
\cite{bertsekas2014constrained} (page $104$). The overall algorithm is available in the paper as Alg.~$2$. The key primal and dual steps can be seen in the following section.

\clearpage

\subsection{Algorithm for baselines}
We provide the primal and dual steps used for the baseline algorithms for the ImSitu experiments from the paper. The basic framework for all the baselines remains the same as Alg.~$2$ in the paper. For Proxy-Lagrangian, only the key ideas in \cite{cotter2018two} were adopted for implementation.
\begin{empheq}[box={\GaryboxAlg[Unconstrained]}]{align*}
	\text{PRIMAL:} &\quad v_t \in \partial \hat e_{h_w} \\
	\text{DUAL:} &\quad \text{None}	
\end{empheq}
\begin{empheq}[box={\GaryboxAlg[$\ell_2$ Penalty]}]{align*}
\text{PRIMAL:} &\quad v_t \in \partial \Big( \hat e_{h_w} + \eta (\hat \mu_{h_w}^{s_0} - \hat \mu_{h_w}^{s_1})^2 \Big) \\
\text{DUAL:} &\quad \text{None} \\	
\text{Parameters:} &\quad \text{Penalty Parameter } \eta
\end{empheq}
\begin{empheq}[box={\GaryboxAlg[Reweight]}]{align*}
\text{PRIMAL:} &\quad v_t \in \partial \Big( \hat e_{h_w} + \eta_0 \hat \mu_{h_w}^{s_0} + \eta_1 \hat \mu_{h_w}^{s_1} \Big) \\
\text{DUAL:} &\quad \text{None} \\	
\text{Parameters:} &\quad \eta_i \propto 1 / (\# \text{ samples in } s_i)
\end{empheq}
\begin{empheq}[box={\GaryboxAlg[Lagrangian \cite{zhao2017men}]}]{align*}
\text{PRIMAL:} &\quad v_t \in \partial \Big( \hat e_{h_w} + \lambda^{0\backslash 1}_t (\hat \mu_{h_w}^{s_0} - \hat \mu_{h_w}^{s_1} - \epsilon) +  \lambda^{1\backslash 0}_t (\hat \mu_{h_w}^{s_1} - \hat \mu_{h_w}^{s_0} - \epsilon)\Big) \\
\text{DUAL:} &\quad \lambda^{i\backslash j}_{t+1} \leftarrow \max \big(0, \lambda^{i\backslash j}_t + \eta_{i\backslash j} (\hat \mu_{h_w}^{s_i} - \hat \mu_{h_w}^{s_j} - \epsilon) \big)\\	
\text{Parameters:} &\quad \text{Dual step sizes } \eta_{0 \backslash 1}, \eta_{1 \backslash 0} \text{ Tol. } \epsilon \approx 0.05. \hskip 5pt \mtiny{i\backslash j \in \{ 0\backslash 1, 1 \backslash 0\} }
\end{empheq}
\begin{empheq}[box={\GaryboxAlg[Proxy-Lagrangian \cite{cotter2018two}]}]{align*}
\text{PRIMAL:} &\quad v_t \in \partial \Big( \hat e_{h_w} + \lambda^{0 \backslash 1}_t (\hat \mu_{h_w}^{s_0} - \hat \mu_{h_w}^{s_1} - \epsilon) +  \lambda^{1 \backslash 0}_t (\hat \mu_{h_w}^{s_1} - \hat \mu_{h_w}^{s_0} - \epsilon)\Big) \\
\text{DUAL:} &\quad \theta^{i \backslash j}_{t+1} \leftarrow \theta^{i\backslash j}_{t} + \eta_{i\backslash j} (\hat \mu_{h_w}^{s_i} - \hat \mu_{h_w}^{s_j} - \epsilon) \\
&\quad \lambda^{i\backslash j}_{t+1} \leftarrow B \frac{\exp{\theta^{i\backslash j}_{t+1}}}{1 + \exp{\theta^{i\backslash j}_{t+1}} + \exp{\theta^{j\backslash i}_{t+1}}}\\	
\text{Parameters:} &\quad \text{Dual step sizes } \eta_{0\backslash 1}/\eta_{1\backslash 0}. \text{ Tol. } \epsilon \approx 0.05, \text{ Hyperparam. } B \\
&\quad \text{No surrogates in DUAL for } \hat \mu^{s_0}_{h_w} / \hat \mu^{s_1}_{h_w}. \hskip 5pt \mtiny{i\backslash j \in \{ 0\backslash 1, 1 \backslash 0\} }
\end{empheq}
\begin{empheq}[box={\GaryboxAlg[FairALM]}]{align*}
\text{PRIMAL:} &\quad v_t \in \partial \Big(\hat e_{h_w}(z) + (\lambda_t + \eta) \hat \mu_{h_w}^{s_0} (z) - (\lambda_t-\eta) \hat \mu_{h_w}^{s_1} (z) \Big)  \\
\text{DUAL:} &\quad \lambda_{t+1} \leftarrow \lambda_t + \eta \big(\hat \mu_{h_{w}}^{s_0} - \hat \mu_{h_{w}}^{s_1} \big)  \\	
\text{Parameters:} &\quad \text{Dual Step Size } \eta
\end{empheq}


\clearpage
\subsection{Supplementary Results on CelebA}
{\bf Additional Results. } The dual step size $\eta$ is a key parameter in {\rm FairALM} training. Analogous to the dual step size $\eta$ we have the penalty parameter in $\ell_2$ penalty training, also denoted by $\eta$.  It can be seen from Figure~\ref{fig:fairalm_ablation_celeba} and Figure~\ref{fig:l2penalty_ablation_celeba} that {\rm FairALM} is more robust to different choices of $\eta$ than $\ell_2$ penalty. The target class in this section is \textit{attractiveness} and protected attribute is \textit{gender}.
\begin{figure*}[!h]
  	\centering
  	\frame{
	\begin{minipage}{0.2\linewidth}
		\includegraphics[width=\linewidth]{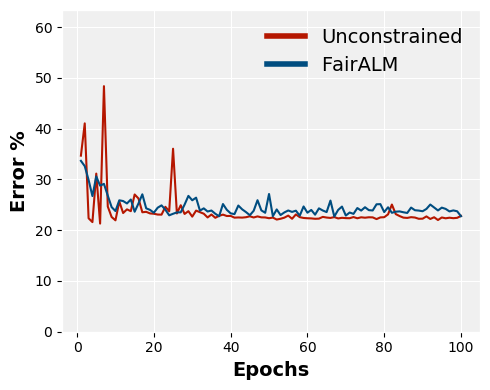}
	\end{minipage}
	$(\eta = 20)$
	\begin{minipage}{0.2\linewidth}
		\includegraphics[width=\linewidth]{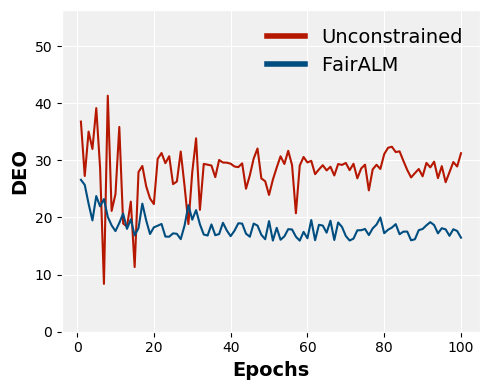}
	\end{minipage}}
	\hskip 4pt
	\frame{
	\begin{minipage}{0.2\linewidth}
		\includegraphics[width=\linewidth]{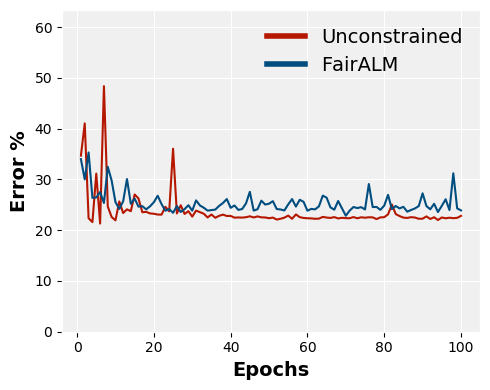}
	\end{minipage}
	$(\eta = 40)$
	\begin{minipage}{0.2\linewidth}
		\includegraphics[width=\linewidth]{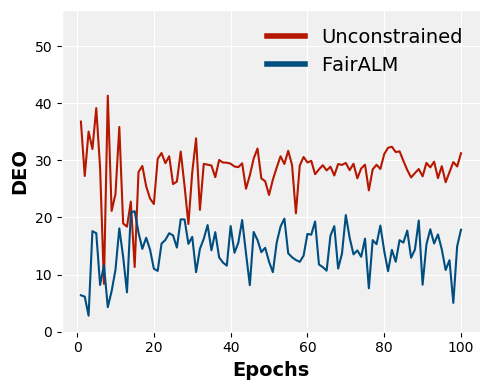}
	\end{minipage}} %
	\vskip 4pt
	\frame{
	\begin{minipage}{0.2\linewidth}
		\includegraphics[width=\linewidth]{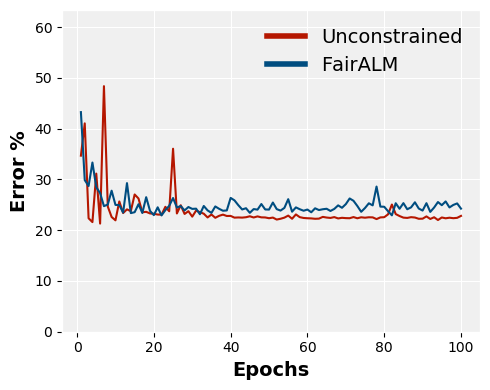}
	\end{minipage}
	$(\eta = 60)$
	\begin{minipage}{0.2\linewidth}
		\includegraphics[width=\linewidth]{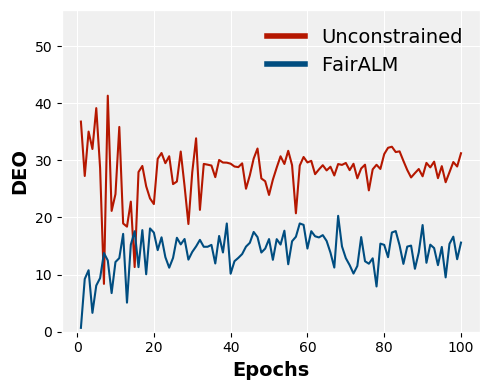}
	\end{minipage}}
	\hskip 4pt
	\frame{
	\begin{minipage}{0.2\linewidth}
		\includegraphics[width=\linewidth]{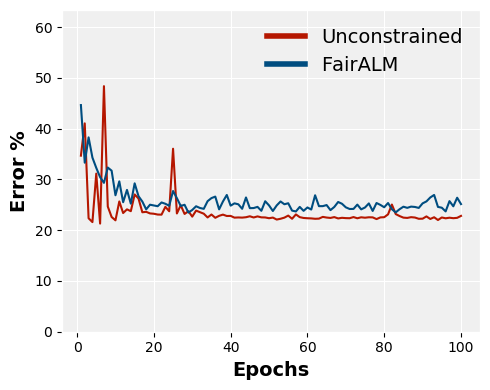}
	\end{minipage}
	$(\eta = 80)$
	\begin{minipage}{0.2\linewidth}
		\includegraphics[width=\linewidth]{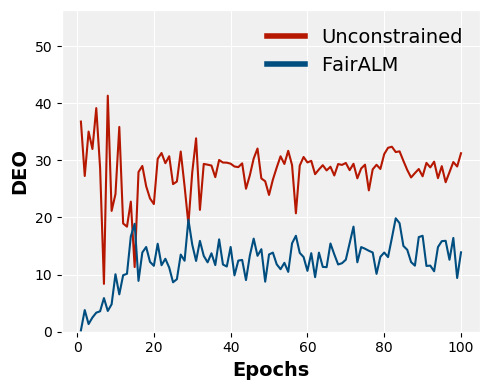}
	\end{minipage}}%
	\caption{\label{fig:fairalm_ablation_celeba}\footnotesize \textbf{FairALM Ablation on CelebA. } For a given $\eta$, the left image represents the test error and the right image shows the DEO measure. We study the effect of varying the dual step size $\eta$ on FairALM. We observe that the performance of FairALM is consistent over a wide range of $\eta$ values.}
\end{figure*}

\begin{figure*}[!h]
	\centering
	\frame{
		\begin{minipage}{0.19\linewidth}
			\includegraphics[width=\linewidth]{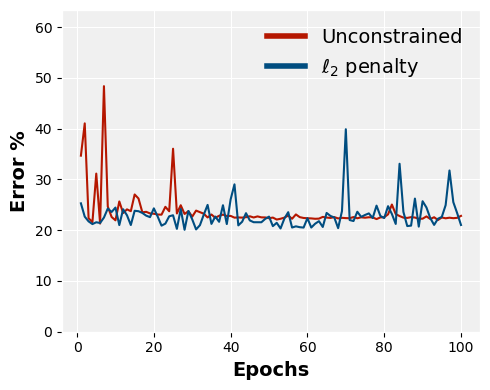}
		\end{minipage}
		$(\eta = 0.001)$
		\begin{minipage}{0.19\linewidth}
			\includegraphics[width=\linewidth]{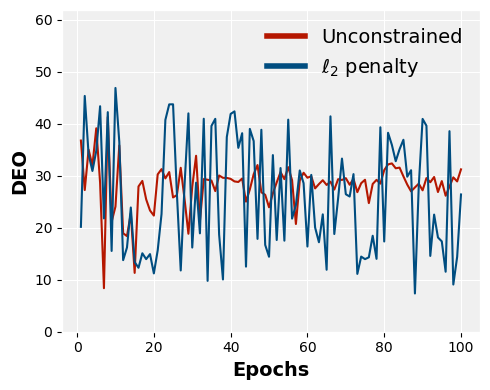}
	\end{minipage}} 
	\hskip 4pt
	\frame{
		\begin{minipage}{0.19\linewidth}
			\includegraphics[width=\linewidth]{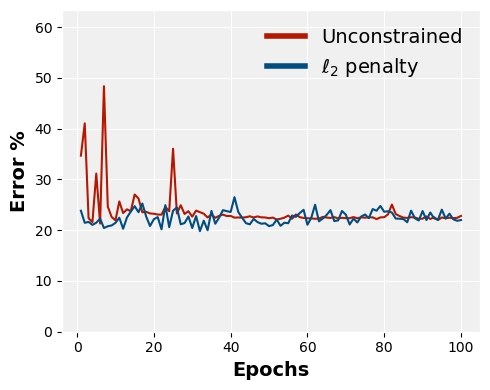}
		\end{minipage}
		$(\eta = 0.01)$ \hskip 4pt
		\begin{minipage}{0.19\linewidth}
			\includegraphics[width=\linewidth]{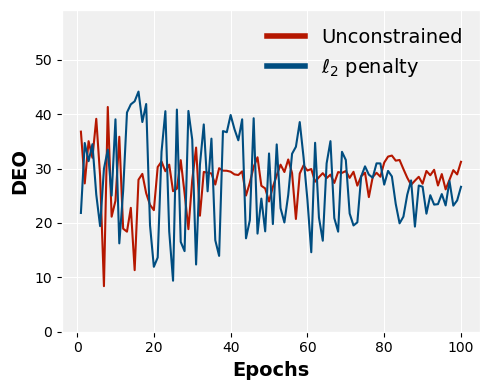}
	\end{minipage}} %
	\vskip 4pt
	\frame{
		\begin{minipage}{0.19\linewidth}
			\includegraphics[width=\linewidth]{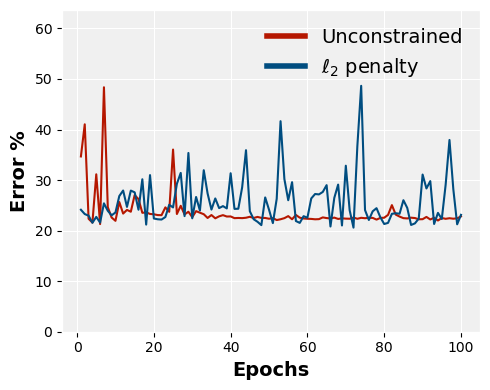}
		\end{minipage}
		$(\eta = 0.1)$ \hskip 8pt
		\begin{minipage}{0.19\linewidth}
			\includegraphics[width=\linewidth]{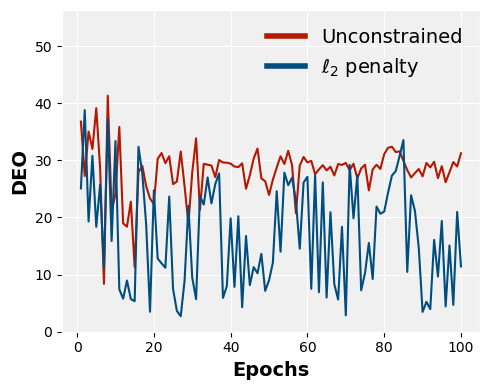}
	\end{minipage}} 
	\hskip 4pt
	\frame{
		\begin{minipage}{0.19\linewidth}
			\includegraphics[width=\linewidth]{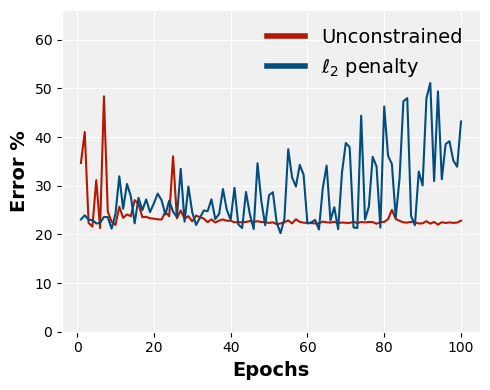}
		\end{minipage}
		$(\eta = 1)$ \hskip 12pt
		\begin{minipage}{0.19\linewidth}
			\includegraphics[width=\linewidth]{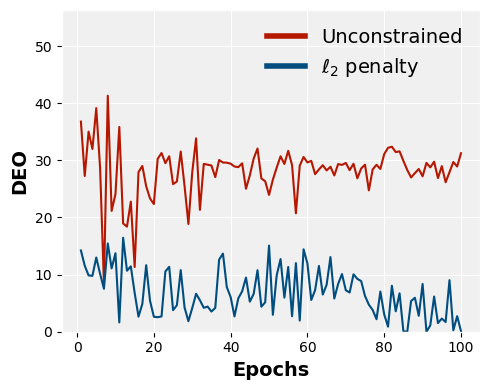}
	\end{minipage}}%
	\caption{\label{fig:l2penalty_ablation_celeba} \footnotesize \textbf{$\bm{\ell_2}$ Penalty Ablation on CelebA} For each $\eta$ value, the left image represents the test set errors and the right image shows the fairness measure (DEO). We investigate a popular baseline to impose fairness constraint which is the $\ell_2$ penalty. We study the effect of varying the penalty parameter $\eta$ in this figure. We observe that training with $\ell_2$ penalty is quite unstable. For $\eta > 1$, the algorithm doesn't converge and raises numerical errors.}
\end{figure*}

\clearpage
{\bf More Interpretability Results. }
We present the activation maps obtained when running the {\em FairALM} algorithm, unconstrained algorithm and the gender classification task. We show our results in Figure~\ref{more_acti}. The target class is \textit{attractiveness} and protected attribute is gender. We threshold the maps to show only the most significant colors. The maps from gender classification task look at gender-revealing attributes such as presence of \textit{long-hair}. The unconstrained model looks mostly at the entire image. {\em FairALM} looks at only a specific region of the face which is not gender revealing.  
\begin{figure*}[!h]
	\centering
	{\setlength{\fboxsep}{4pt}\fbox{
		\begin{minipage}{0.12\linewidth}
			\figuretitle{\colorbox{mylightgrey}{Gender}}
			\adjustbox{cfbox=mygrey 2pt 0pt}{\includegraphics[width=\linewidth]{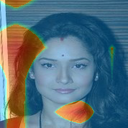}}
		\end{minipage}
		\begin{minipage}{0.12\linewidth}
			\figuretitle{\colorbox{mylightpink}{Unconstrained}}
			\adjustbox{cfbox=mypink 2pt 0pt}{\includegraphics[width=\linewidth]{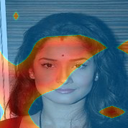}}
		\end{minipage}
		\begin{minipage}{0.12\linewidth}
			\figuretitle{\colorbox{mylightgreen}{FairALM}}
			\adjustbox{cfbox=mygreen 2pt 0pt}{\includegraphics[width=\linewidth]{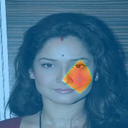}}
		\end{minipage}}}
		\quad
		{\setlength{\fboxsep}{4pt}\fbox{
			\begin{minipage}{0.12\linewidth}
				\figuretitle{\colorbox{mylightgrey}{Gender}}
				\adjustbox{cfbox=mygrey 2pt 0pt}{\includegraphics[width=\linewidth]{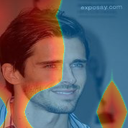}}
			\end{minipage}
			\begin{minipage}{0.12\linewidth}
				\figuretitle{\colorbox{mylightpink}{Unconstrained}}
				\adjustbox{cfbox=mypink 2pt 0pt}{\includegraphics[width=\linewidth]{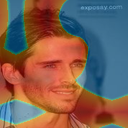}}
			\end{minipage}
			\begin{minipage}{0.12\linewidth}
				\figuretitle{\colorbox{mylightgreen}{FairALM}}
				\adjustbox{cfbox=mygreen 2pt 0pt}{\includegraphics[width=\linewidth]{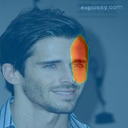}}
	\end{minipage}}}

	\vskip 4pt
	{\setlength{\fboxsep}{4pt}\fbox{
		\begin{minipage}{0.12\linewidth}
			\figuretitle{\colorbox{mylightgrey}{Gender}}
			\adjustbox{cfbox=mygrey 2pt 0pt}{\includegraphics[width=\linewidth]{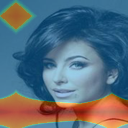}}
		\end{minipage}
		\begin{minipage}{0.12\linewidth}
			\figuretitle{\colorbox{mylightpink}{Unconstrained}}
			\adjustbox{cfbox=mypink 2pt 0pt}{\includegraphics[width=\linewidth]{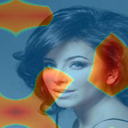}}
		\end{minipage}
		\begin{minipage}{0.12\linewidth}
			\figuretitle{\colorbox{mylightgreen}{FairALM}}
			\adjustbox{cfbox=mygreen 2pt 0pt}{\includegraphics[width=\linewidth]{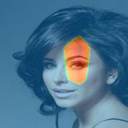}}
\end{minipage}}}
\quad
{\setlength{\fboxsep}{4pt}\fbox{
		\begin{minipage}{0.12\linewidth}
			\figuretitle{\colorbox{mylightgrey}{Gender}}
			\adjustbox{cfbox=mygrey 2pt 0pt}{\includegraphics[width=\linewidth]{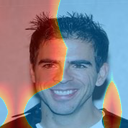}}
		\end{minipage}
		\begin{minipage}{0.12\linewidth}
			\figuretitle{\colorbox{mylightpink}{Unconstrained}}
			\adjustbox{cfbox=mypink 2pt 0pt}{\includegraphics[width=\linewidth]{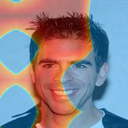}}
		\end{minipage}
		\begin{minipage}{0.12\linewidth}
			\figuretitle{\colorbox{mylightgreen}{FairALM}}
			\adjustbox{cfbox=mygreen 2pt 0pt}{\includegraphics[width=\linewidth]{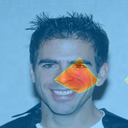}}
\end{minipage}}}

	\vskip 4pt
{\setlength{\fboxsep}{4pt}\fbox{
		\begin{minipage}{0.12\linewidth}
			\figuretitle{\colorbox{mylightgrey}{Gender}}
			\adjustbox{cfbox=mygrey 2pt 0pt}{\includegraphics[width=\linewidth]{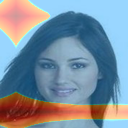}}
		\end{minipage}
		\begin{minipage}{0.12\linewidth}
			\figuretitle{\colorbox{mylightpink}{Unconstrained}}
			\adjustbox{cfbox=mypink 2pt 0pt}{\includegraphics[width=\linewidth]{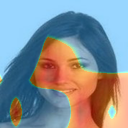}}
		\end{minipage}
		\begin{minipage}{0.12\linewidth}
			\figuretitle{\colorbox{mylightgreen}{FairALM}}
			\adjustbox{cfbox=mygreen 2pt 0pt}{\includegraphics[width=\linewidth]{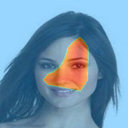}}
\end{minipage}}}
\quad
{\setlength{\fboxsep}{4pt}\fbox{
		\begin{minipage}{0.12\linewidth}
			\figuretitle{\colorbox{mylightgrey}{Gender}}
			\adjustbox{cfbox=mygrey 2pt 0pt}{\includegraphics[width=\linewidth]{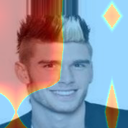}}
		\end{minipage}
		\begin{minipage}{0.12\linewidth}
			\figuretitle{\colorbox{mylightpink}{Unconstrained}}
			\adjustbox{cfbox=mypink 2pt 0pt}{\includegraphics[width=\linewidth]{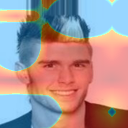}}
		\end{minipage}
		\begin{minipage}{0.12\linewidth}
			\figuretitle{\colorbox{mylightgreen}{FairALM}}
			\adjustbox{cfbox=mygreen 2pt 0pt}{\includegraphics[width=\linewidth]{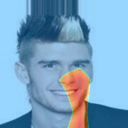}}
\end{minipage}}}

	\vskip 4pt
{\setlength{\fboxsep}{4pt}\fbox{
		\begin{minipage}{0.12\linewidth}
			\figuretitle{\colorbox{mylightgrey}{Gender}}
			\adjustbox{cfbox=mygrey 2pt 0pt}{\includegraphics[width=\linewidth]{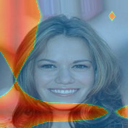}}
		\end{minipage}
		\begin{minipage}{0.12\linewidth}
			\figuretitle{\colorbox{mylightpink}{Unconstrained}}
			\adjustbox{cfbox=mypink 2pt 0pt}{\includegraphics[width=\linewidth]{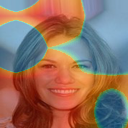}}
		\end{minipage}
		\begin{minipage}{0.12\linewidth}
			\figuretitle{\colorbox{mylightgreen}{FairALM}}
			\adjustbox{cfbox=mygreen 2pt 0pt}{\includegraphics[width=\linewidth]{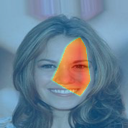}}
\end{minipage}}}
\quad
{\setlength{\fboxsep}{4pt}\fbox{
		\begin{minipage}{0.12\linewidth}
			\figuretitle{\colorbox{mylightgrey}{Gender}}
			\adjustbox{cfbox=mygrey 2pt 0pt}{\includegraphics[width=\linewidth]{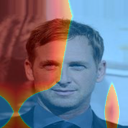}}
		\end{minipage}
		\begin{minipage}{0.12\linewidth}
			\figuretitle{\colorbox{mylightpink}{Unconstrained}}
			\adjustbox{cfbox=mypink 2pt 0pt}{\includegraphics[width=\linewidth]{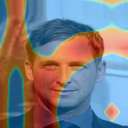}}
		\end{minipage}
		\begin{minipage}{0.12\linewidth}
			\figuretitle{\colorbox{mylightgreen}{FairALM}}
			\adjustbox{cfbox=mygreen 2pt 0pt}{\includegraphics[width=\linewidth]{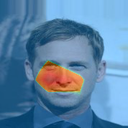}}
\end{minipage}}}

	\vskip 4pt
{\setlength{\fboxsep}{4pt}\fbox{
		\begin{minipage}{0.12\linewidth}
			\figuretitle{\colorbox{mylightgrey}{Gender}}
			\adjustbox{cfbox=mygrey 2pt 0pt}{\includegraphics[width=\linewidth]{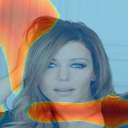}}
		\end{minipage}
		\begin{minipage}{0.12\linewidth}
			\figuretitle{\colorbox{mylightpink}{Unconstrained}}
			\adjustbox{cfbox=mypink 2pt 0pt}{\includegraphics[width=\linewidth]{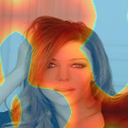}}
		\end{minipage}
		\begin{minipage}{0.12\linewidth}
			\figuretitle{\colorbox{mylightgreen}{FairALM}}
			\adjustbox{cfbox=mygreen 2pt 0pt}{\includegraphics[width=\linewidth]{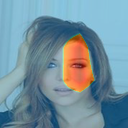}}
\end{minipage}}}
\quad
{\setlength{\fboxsep}{4pt}\fbox{
		\begin{minipage}{0.12\linewidth}
			\figuretitle{\colorbox{mylightgrey}{Gender}}
			\adjustbox{cfbox=mygrey 2pt 0pt}{\includegraphics[width=\linewidth]{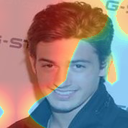}}
		\end{minipage}
		\begin{minipage}{0.12\linewidth}
			\figuretitle{\colorbox{mylightpink}{Unconstrained}}
			\adjustbox{cfbox=mypink 2pt 0pt}{\includegraphics[width=\linewidth]{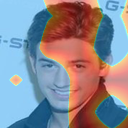}}
		\end{minipage}
		\begin{minipage}{0.12\linewidth}
			\figuretitle{\colorbox{mylightgreen}{FairALM}}
			\adjustbox{cfbox=mygreen 2pt 0pt}{\includegraphics[width=\linewidth]{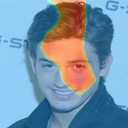}}
\end{minipage}}}
	\caption{\label{more_acti} \footnotesize \textbf{Interpretability in CelebA.} We find that an unconstrained model picks up a lot of gender revealing attributes however FairALM doesn't. The image labelled {\tt Gender} denotes the map of a gender classification task. We observe overlap between the maps of gender classification task and the unconstrained model. The activation maps are regulated to show colors above a fixed threshold to highlight the most significant regions used by a model.}
\end{figure*}

\clearpage
\subsection{Supplementary Results on ImSitu}
{\bf Detailed Setup.} We use the standard ResNet-18 architecture for the base model. We initialize the weights of the conv layers weights from ResNet-18 trained on ImageNet (ILSVRC). We train the model using SGD optimizer and a batch size of $256$. For first few epochs ($\approx20$) only the linear layer is trained  with a learning rate of $0.01/0.005$. Thereafter, the entire model is trained end to end with a lower learning rate of $0.001/0.0005$ till the accuracy plateaus. 

{\bf Meaning of Target class $(+)$. } Target class $(+)$ is something that a classifier tries to predict from an image. Recall the basic notations $\S~2$ from the paper, $\mu_h^{s_i,t_j}:=\mu_h|(s=s_i,t=t_j)$ denotes the elementary conditional expectation of some function $\mu_h$ with respect to two random variables $s,t$. When we say we are imposing DEO for a target class $t_j$ we refer to imposing constraint on the difference in conditional expectation of the two groups of $s$ for the class $t_j$, that is, $d_h = \mu_h^{s_0,t_j} - \mu_h^{s_1,t_j}$. For example, for \textit{Cooking $(+)$} vs \textit{Driving $(-)$} problem when we say \textit{Cooking $(+)$} is regarded as the target class we mean that $t_j=\textit{cooking}$ and hence the DEO constraint is of the form $d_h = \mu_h^{s_0,cooking} - \mu_h^{s_1,cooking}$.

{\bf Supplementary Training Profiles.} We plot the test set errors and the DEO measure during the course of training for the verb pair classifications reported in the paper. We compare against the baselines discussed in Table~$1$ of the paper. The plots in Fig.~\ref{fig:fairalm_supp_plots} below supplement Fig.~$5$ in the paper.
\begin{figure}[!h]
	\centering
	\begin{minipage}{0.8\linewidth}
		\centering
		\frame{\includegraphics[width=0.24\linewidth]{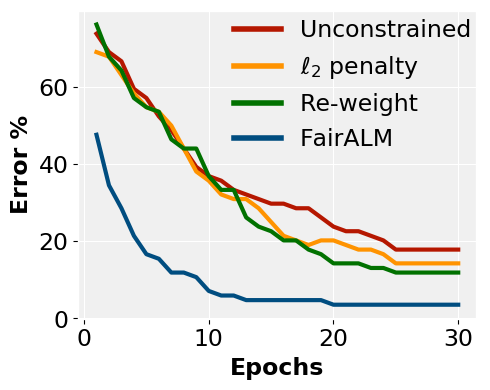}
		\includegraphics[width=0.24\linewidth]{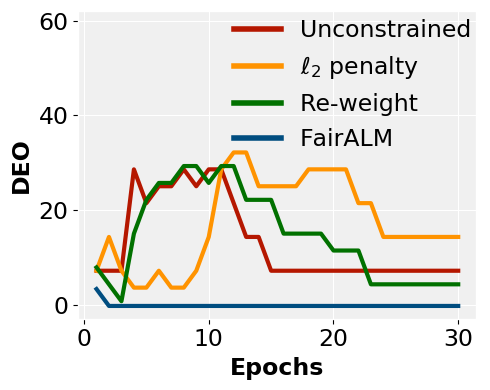}} \hskip 4pt
		\frame{\includegraphics[width=0.24\linewidth]{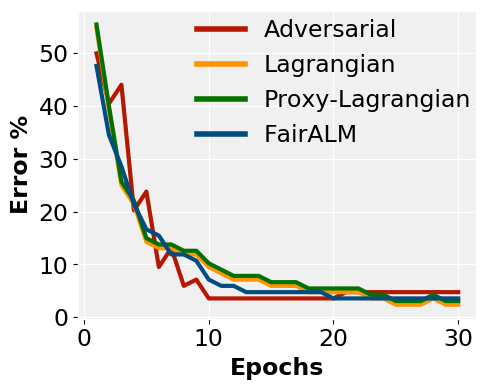}
		\includegraphics[width=0.24\linewidth]{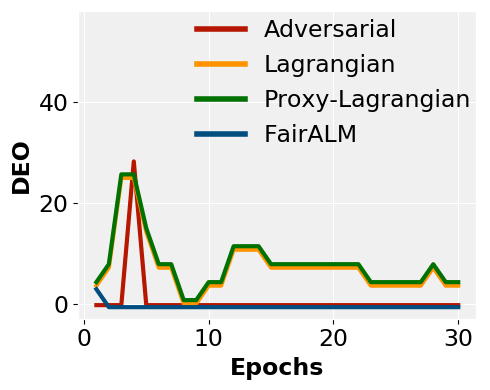}}
		\vskip -1pt
		\par Cooking $\tiny(+)$ Driving $\tiny(-)$
	\end{minipage}
	\vskip 6pt
	\begin{minipage}{0.8\linewidth}
	\centering
	\frame{\includegraphics[width=0.24\linewidth]{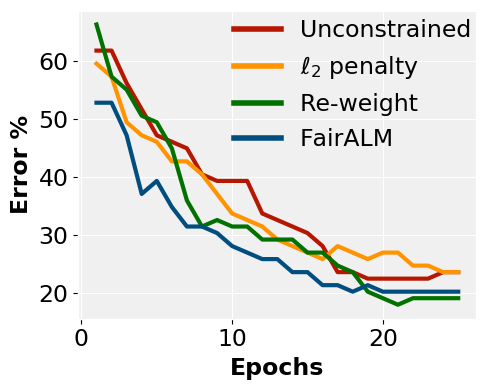}
		\includegraphics[width=0.24\linewidth]{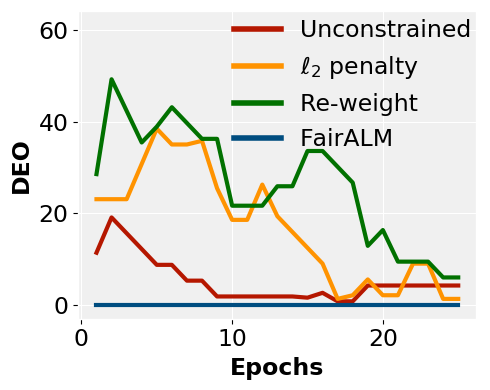}} \hskip 4pt
	\frame{\includegraphics[width=0.24\linewidth]{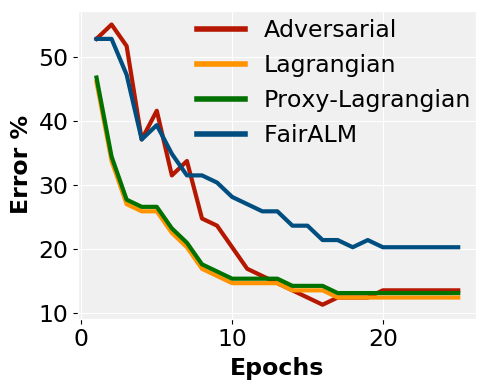}
		\includegraphics[width=0.24\linewidth]{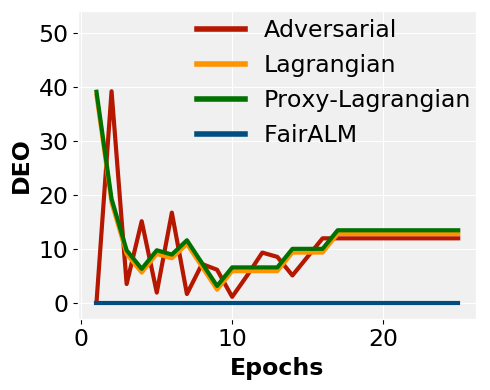}}	
	\vskip -1pt
	\par Shaving $\tiny(+)$ Moisturizing $\tiny(-)$
\end{minipage}	
	\vskip 6pt
\begin{minipage}{0.8\linewidth}
	\centering
	\frame{\includegraphics[width=0.24\linewidth]{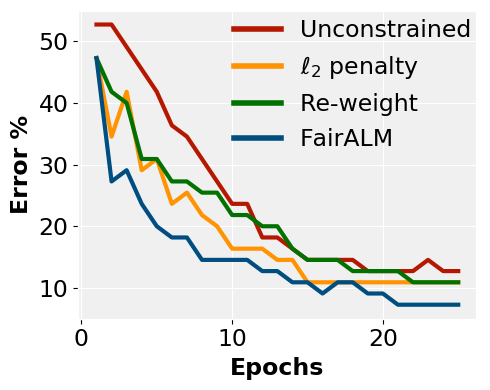}
		\includegraphics[width=0.24\linewidth]{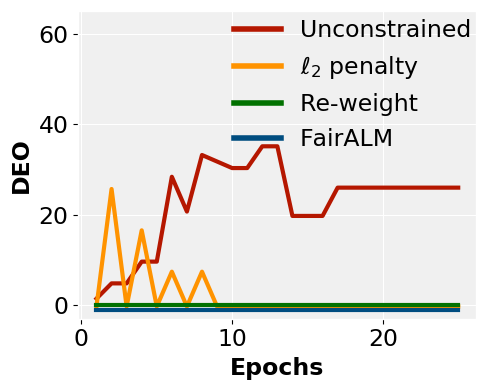}} \hskip 4pt
	\frame{\includegraphics[width=0.24\linewidth]{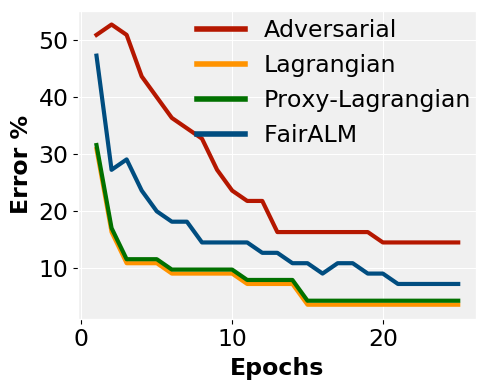}
		\includegraphics[width=0.24\linewidth]{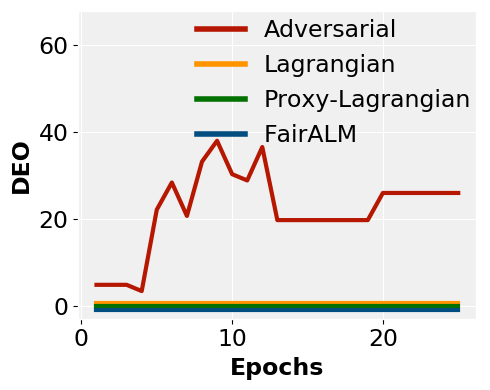}}	
	\vskip -1pt
	\par Washing $\tiny(+)$ Saluting $\tiny(-)$
\end{minipage}	
	\vskip 6pt
\begin{minipage}{0.8\linewidth}
	\centering
	\frame{\includegraphics[width=0.24\linewidth]{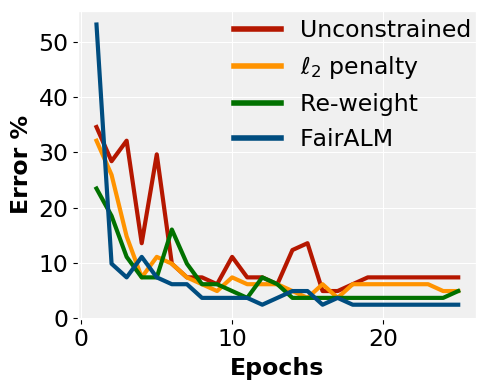}
		\includegraphics[width=0.24\linewidth]{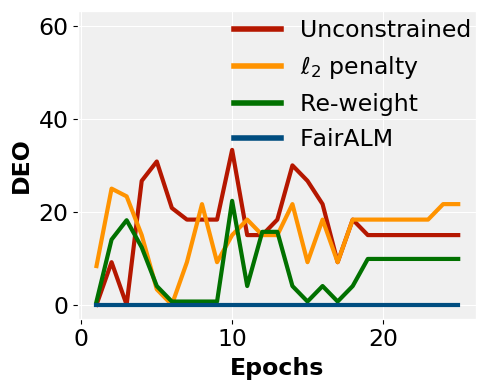}} \hskip 4pt
	\frame{\includegraphics[width=0.24\linewidth]{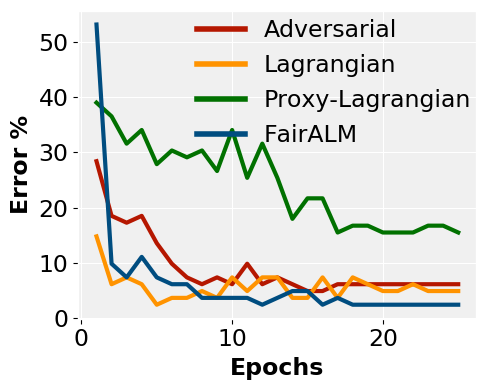}
		\includegraphics[width=0.24\linewidth]{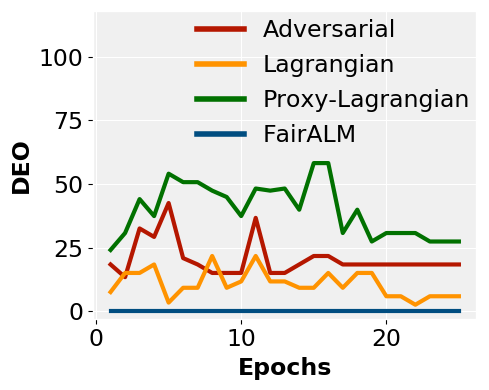}}	
	\vskip -1pt
	\par Assembling $\tiny(+)$ Hanging $\tiny(-)$
\end{minipage}
	\caption{\label{fig:fairalm_supp_plots}\footnotesize \textbf{Supplementary Training Profiles.} FairALM consistently achieves minimum DEO across different verb pair classifications.}
\end{figure}

{\bf Additional qualitative results} We show the activation maps in Fig.~\ref{fig:imsitu-supp-plots} to illustrate that the features used by FairALM model are more aligned with the action/verb present in the image and are not gender leaking. The verb pairs have been chosen randomly from the list provided in \cite{zhao2017men}. In all the cases {\tt Gender} is considered as the protected attribute. The activation maps are regulated to show colors above a fixed threshold in order to highlight the most significant regions used by a model to make a prediction.
\begin{figure*}[!h]
	\centering
	\begin{minipage}{0.9\linewidth}
		\includegraphics[width=0.5\linewidth]{supp_figs/imsitu_eccv/cam_supp_filtered/microwaving_pumping_1.png} \hskip 4pt
		\includegraphics[width=0.5\linewidth]{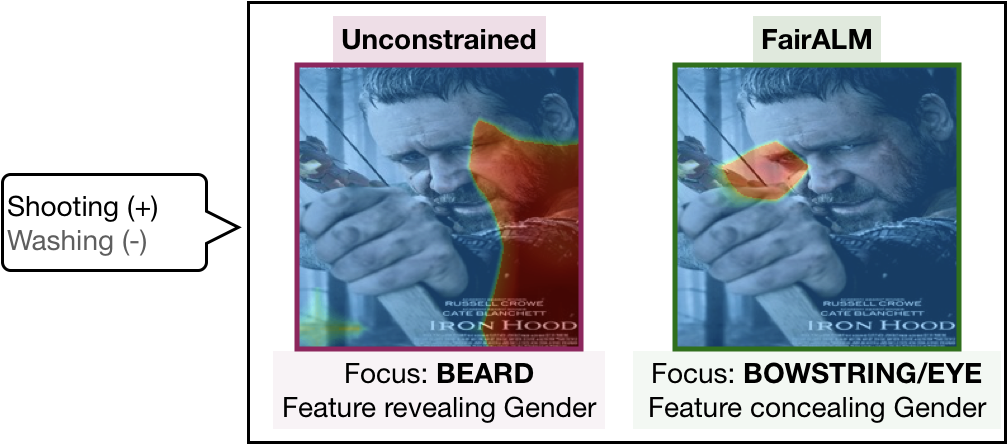}
	\end{minipage}%
	\vskip 4pt
	\begin{minipage}{0.9\linewidth}
		\includegraphics[width=0.5\linewidth]{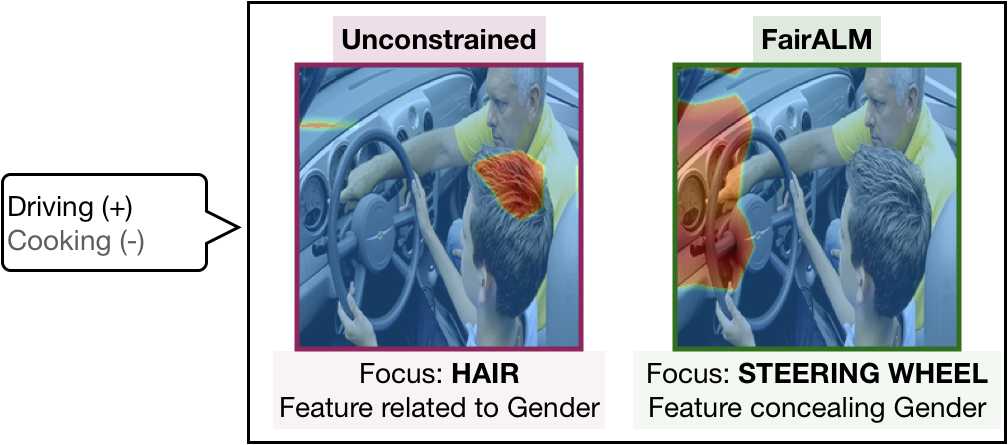} \hskip 4pt
		\includegraphics[width=0.5\linewidth]{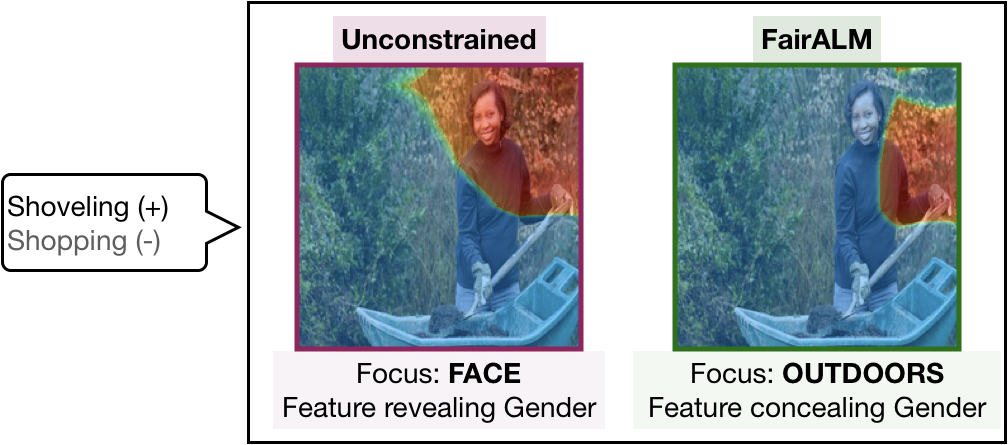}
	\end{minipage}
	\vskip 4pt
\begin{minipage}{0.9\linewidth}
	\includegraphics[width=0.5\linewidth]{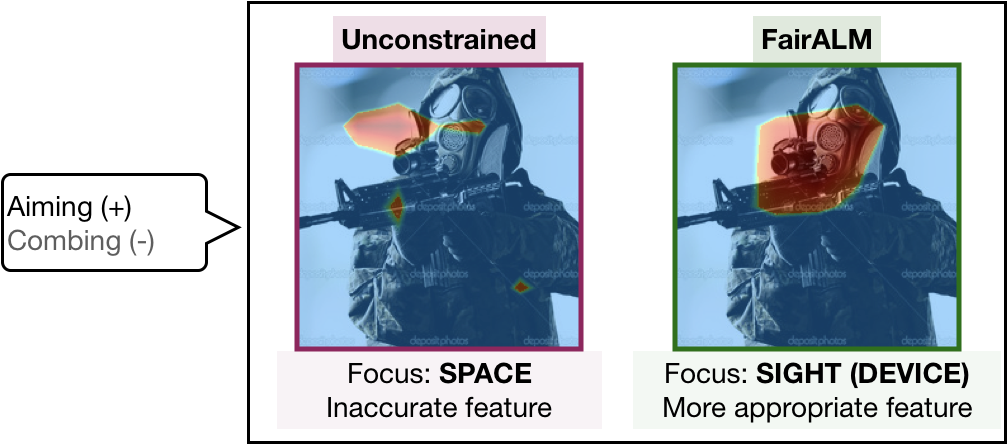} \hskip 4pt
	\includegraphics[width=0.5\linewidth]{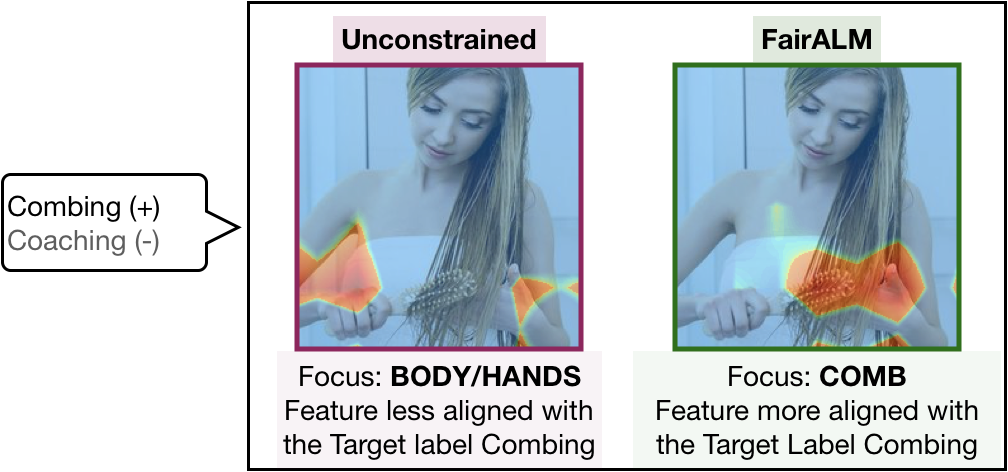}
\end{minipage}
	\vskip 4pt
\begin{minipage}{0.9\linewidth}
	\includegraphics[width=0.5\linewidth]{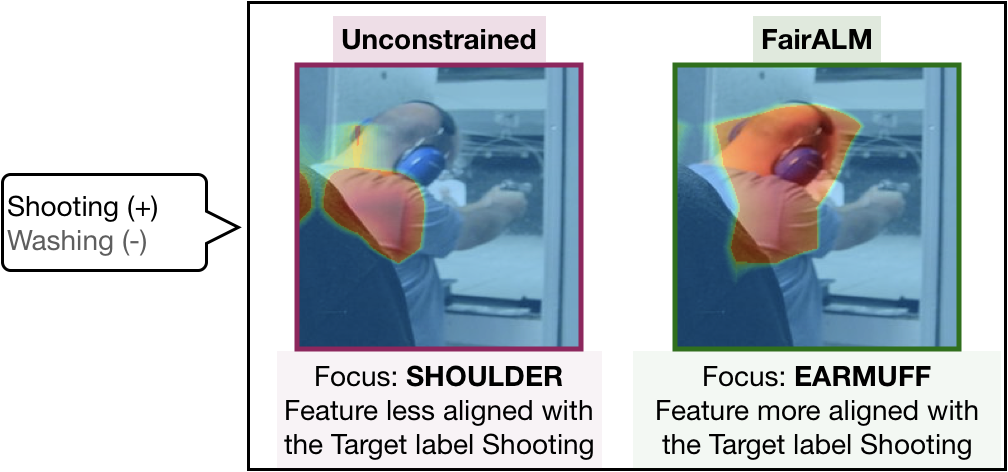} \hskip 4pt
	\includegraphics[width=0.5\linewidth]{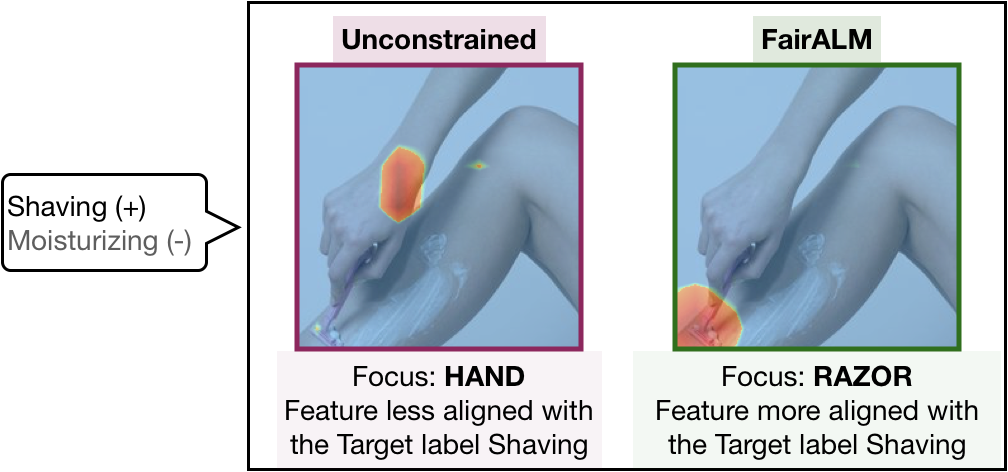}
\end{minipage}
	\vskip 4pt
\begin{minipage}{0.9\linewidth}
	\includegraphics[width=0.5\linewidth]{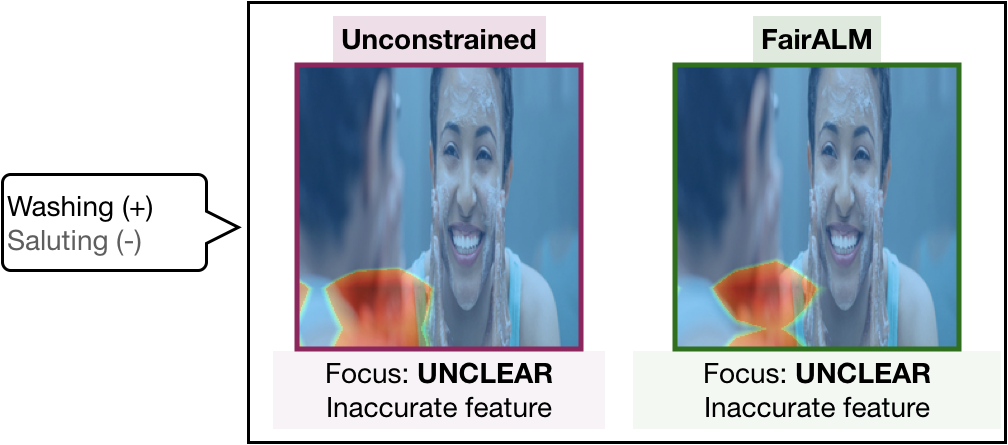} \hskip 4pt
	\includegraphics[width=0.5\linewidth]{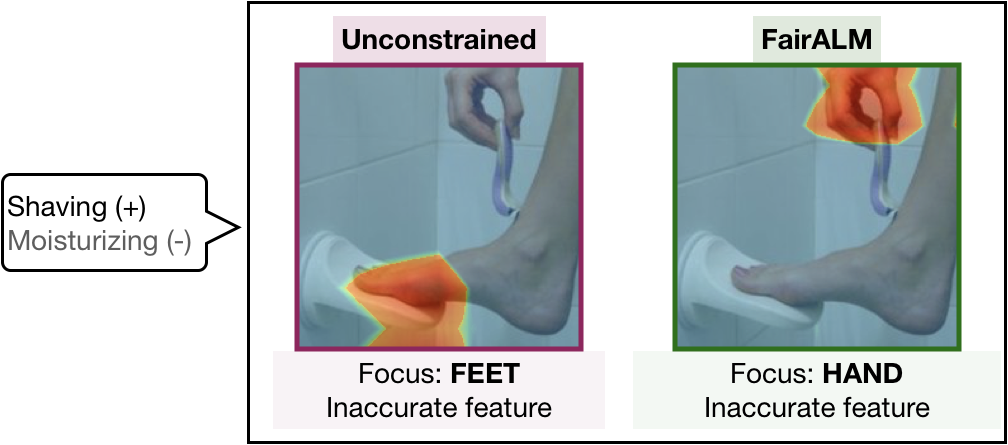}
\end{minipage}
	\caption{\label{fig:imsitu-supp-plots} \footnotesize {\textbf{Additional qualitative Results in ImSitu dataset.}} Models predict the target class $(+)$. FairALM consistently avoids gender revealing features and uses features that are more relevant to the target class. Due to the small dataset sizes, a \textit{limitation} of this experiment is shown in the last row where both FairALM and Unconstrained model look at incorrect regions. The number of such cases in FairALM is far less than those in the unconstrained model.}
\end{figure*}

\end{document}